\documentclass[11pt]{article}
\usepackage{geometry}               
\usepackage{amsmath} 
\usepackage{amssymb}  
\usepackage{mathrsfs}
\usepackage{mathtools}
\usepackage{xcolor}
\usepackage{algorithm}
\usepackage{algorithmic}
\usepackage{bm}
\usepackage{multirow}
\usepackage{booktabs}
\usepackage{url}

\usepackage{amsthm}

\newtheorem{theorem}{Theorem}
\newtheorem{proposition}{Proposition}

\newtheorem{problem}{Problem}

\newtheorem{corollary}{Corollary}

\DeclareMathOperator*{\argmin}{arg\,min}

\title{\LARGE \bf
From Prediction Uncertainty to Conformalized Distance Fields for Safe Motion Planning\thanks{This work was supported in part by the Information and Communications Technology Planning and Evaluation (IITP) grants funded by MSIT No. 2022-0-00124, No. 2022-0-00480 and No. RS-2021-II211343, Artificial Intelligence Graduate School Program (Seoul National University),
and the National Research Foundation of Korea (NRF) grant funded by MSIT No. RS-2026-25477173.}
}

\author{
Jaeuk Shin, 
Yoonseok Ra,
and Insoon Yang 
\thanks{The first two authors contributed equally to this work.}
\thanks{
All authors are with the Department of Electrical and Computer Engineering, ASRI,  Seoul National University, Seoul 08826, South Korea, 
        {\tt\small \{sju5379, rys522, insoonyang\}@snu.ac.kr}}%
}

\date{}

\begin{document}

\maketitle
\thispagestyle{empty}
\pagestyle{empty}

 \begin{abstract}
Safe motion planning in dynamic environments requires reasoning about the uncertainty in predicted obstacle motion without sacrificing real-time performance. Existing conformal approaches conformalize a scalar score that aggregates per-obstacle prediction errors, losing spatial coherence and scaling poorly with scene density. We instead conformalize the entire predicted distance field at once. This functional conformal prediction (FCP) framework yields a distribution-free, field-level lower bound, from which safety follows uniformly: any trajectory satisfying the resulting constraint is certified safe, independent of how the control space is sampled. The key enabler is that the residual distance field is empirically low-rank and approximately time-invariant, which makes the bound decomposable in coefficient space. An envelope is fitted offline via functional PCA and a Gaussian-mixture inductive conformal procedure, then refined online by a lightweight adaptive functional conformal (AFCP) update on a low-dimensional vector. This keeps the per-step cost largely insensitive to obstacle count and retains long-run field coverage under distribution shift. We embed the envelope as a tightened safety constraint in a sampling-based model predictive controller, FCP-MPC. On the ETH--UCY pedestrian benchmarks and a dense 3D quadrotor task with up to 280 dynamic obstacles, FCP-MPC attains a favorable balance of safety, feasibility, and efficiency, reaching goals where pointwise and egocentric conformal baselines become too conservative or too expensive, while keeping per-step computation far below online uncertainty-reasoning baselines.
\end{abstract}

\section{Introduction}
Autonomous robots are increasingly asked to operate in environments shared with pedestrians, vehicles, and other moving agents, where safe behavior depends on anticipating how the surrounding world will evolve. The future occupancy of such an environment is unobservable at planning time, so it must be forecast from past observations, and any motion planner that consumes these forecasts inherits their errors. The central difficulty is therefore not prediction or planning in isolation but the coupling of the two: a controller must account for the uncertainty in predicted obstacle motion tightly enough to remain safe, yet cheaply enough to close the loop in real time and in densely populated scenes. Achieving both at once, with quantitative safety guarantees, is the problem we address.

A large body of work embeds prediction uncertainty directly into the planning objective or constraints. Chance-constrained formulations bound the probability of collision~\cite{blackmore2011chance}, and risk-sensitive and distributionally robust planners hedge against tail events or worst-case obstacle distributions~\cite{hakobyan2019risk,fan2021step,hakobyan2021wasserstein}. These methods deliver strong guarantees, but at a price. They typically require committing to an ambiguity set or a parametric/moment-based description of the prediction error, and the associated worst-case optimization is solved online, which grows costly as scene density rises. 
Conformal prediction (CP)~\cite{vovk2005algorithmic,angelopoulos2023conformal,gibbs2021adaptive} offers a complementary, distribution-free route. Rather than constructing such a set or estimating moments, it turns the raw errors of an arbitrary, black-box predictor into sets with finite-sample coverage. CP has been applied to planning among dynamic agents~\cite{lindemann2023safeplanning,dixit2023adaptive}, to perception-based navigation~\cite{mei2025perceive}, and to decision-theoretic control under imperfect predictions~\cite{lekeufack2024decision,contreras2025safe}.

Despite their appeal, most existing CP approaches to motion planning share a common structural limitation. They conformalize a \emph{scalar} score, such as a per-obstacle radius or a per-state margin, and recompute it point-wise online. The online cost then grows with the number of obstacles and the planning horizon, precisely the regime in which real-time safety matters most. Moreover, certifying safety over a \emph{region} rather than at a point requires a worst-case score, which is generally intractable to enforce exactly and is approximated heavily in practice~\cite{mei2025perceive,sundarsingh2026safe}. What is conformalized is thus either cheap but spatially incoherent, or faithful but unaffordable online.

Our point of departure is a representational one. The error of a predicted distance field is not an arbitrary high-dimensional object. Across a scene it is spatially structured and admits an accurate low-dimensional representation. We verify this empirically on the ETH--UCY benchmark, where the residual distance field is approximately time-invariant and low-rank, with a handful of functional principal components capturing most of its variance. This structure suggests treating the prediction error as a \emph{function} over the workspace rather than as a collection of scalars, and conformalizing the entire spatial field at once. Doing so yields a continuous, queryable safety envelope instead of a patchwork of point-wise inflations, and we exploit two consequences throughout. First, the certificate is defined over the entire field, so any trajectory satisfying the constraint is safe regardless of how controls are sampled. Second, because the field is low-rank and approximately time-invariant, the envelope admits an offline--online decomposition that separates the expensive conformalization from real-time planning, keeping the per-step cost largely insensitive to the number of obstacles.

Building on this insight, we introduce a functional conformal prediction (FCP) framework that produces a field-level lower bound on the predicted distance field. We expand the residual field in a data-driven basis obtained by functional principal component analysis, and conformalize the prediction error in the resulting coefficient space with a Gaussian-mixture inductive conformal procedure. This yields a distribution-free envelope available in closed form, as a support function over a union of ellipsoids. The envelope is fitted once, offline, from a large calibration set; online, it is refined by an adaptive functional conformal (AFCP) update that rescales the envelope through a single scalar per horizon step. We embed the conformalized lower bound as a tightened safety constraint in a sampling-based model predictive controller, FCP-MPC, and prove asymptotic closed-loop safety both under exchangeability and, through the online adaptation, under distribution shift.

The contributions of this paper are as follows.
\begin{itemize}
    \item We formulate prediction-uncertainty quantification for safe planning as a functional conformal problem on the residual distance field, yielding a distribution-free, field-level lower bound that is continuous and queryable rather than point-wise. Because the certificate is defined over the whole field, it holds for any trajectory satisfying the constraint, independent of the control sampler.
    
    \item We establish asymptotic closed-loop safety for the hard-constrained MPC whenever the conformalized field satisfies long-run coverage, instantiated with split CP under exchangeability and an adaptive fieldwise update for non-exchangeable streams. We further certify the soft-penalty deployment, which stays safe up to a slack that vanishes as the penalty weight grows and degrades gracefully where the feasible set is empty.
    
    \item To make this field-level bound real-time, we decompose it in coefficient space: a GMM-based inductive conformal envelope is fitted once offline, while online a single low-dimensional adaptive update absorbs distribution shift, keeping per-step computation low and largely decoupled from obstacle count.
    
    \item We demonstrate on the ETH--UCY pedestrian benchmarks and a dense 3D quadrotor task with up to 280 dynamic obstacles that FCP-MPC attains a favorable balance of safety, feasibility, and efficiency. Its per-step cost stays largely insensitive to obstacle count and scales to high densities, where online uncertainty-reasoning baselines become either too conservative to reach the goal or too expensive to meet the real-time budget.\footnote{Our code is available at~\url{https://github.com/CORE-SNU/FCP-MPC}.}
\end{itemize}



\section{Related Works}

\subsection{Navigation in Dynamic Environments: Perception and Prediction}
Safe operation in dynamic environments has classically been decomposed into estimating the evolving occupancy of the world and planning against it. A large body of work tracks moving obstacles by coupling simultaneous localization and mapping with multi-object tracking~\cite{wang2007simultaneous,coue2006bayesian}, or maintains dynamic occupancy representations through particle-based grids~\cite{danescu2011modeling,tanzmeister2014grid} and random-finite-set filters~\cite{nuss2018random}, with continuous particle-based mapping pushed to real-time dense settings~\cite{chen2023continuous}. In parallel, distance fields, whether learned~\cite{park2019deepsdf,ortiz2022isdf} or fused online from depth sensing~\cite{newcombe2011kinectfusion}, have become a favored geometric layer for their smooth, queryable structure. Our formulation inherits this representational viewpoint: rather than reasoning over raw occupancy, we operate directly on the predicted distance field, and we further treat the prediction error of that field as the object to be quantified. On the forecasting side, learned trajectory predictors such as Trajectron++~\cite{salzmann2020trajectron} supply the multi-step obstacle motion that our planner consumes, and their residual errors are precisely what a safety layer must account for.

\subsection{Risk-Aware and Distributionally Robust Motion Planning}
A complementary line of research embeds uncertainty directly into the planning objective or constraints. Chance-constrained formulations bound the probability of collision and have been studied extensively, from convex path planning with obstacles~\cite{blackmore2011chance} to probabilistic collision checking~\cite{du2011probabilistic} and planning in dynamic, uncertain environments~\cite{du2011robot}. Because exact verification of collision constraints against non-convex moving obstacles is generally intractable, these methods typically approximate the free space or the obstacle set, e.g., by unions of convex regions~\cite{zhang2020optimization,deits2015computing}, or accept conservative encodings of the nonconvex feasible set~\cite{da2019collision}. Risk-sensitive planners instead optimize coherent risk measures such as the conditional value-at-risk~\cite{hakobyan2019risk,fan2021step}, while distributionally robust approaches hedge against the worst-case distribution within an ambiguity set~\cite{hakobyan2021wasserstein,hakobyan2023distributionally}. The distributionally robust risk map~\cite{hakobyan2022distributionally}, for instance, certifies CVaR-type safety even when the obstacle motion distribution is only imperfectly learned.

These methods deliver strong guarantees, but the guarantee is only as good as the chosen set or error model, and calibrating it, e.g., selecting a Wasserstein radius or moment bounds, is itself nontrivial. The resulting min--max problem is moreover solved at planning time, so its cost compounds as the obstacle count grows. We share these safety concerns but trade this online worst-case optimization for an offline statistical procedure: a distribution-free conformal envelope is calibrated once, ahead of deployment, leaving only a lightweight coefficient-space update to run online. We do not claim an empirical advantage over distributionally robust planners, and a direct comparison under matched safety levels is left to future work; rather, we position FCP-MPC as a cheaper-online, finite-sample alternative.

\subsection{Conformal Prediction for Safe Planning and Navigation}
Conformal prediction (CP) has recently emerged as a distribution-free route to provable safety, converting heuristic predictor errors into sets with finite-sample coverage. In robotics it has been applied to planning among dynamic agents~\cite{lindemann2023safeplanning,dixit2023adaptive}, to perception-based navigation with statistical assurances~\cite{mei2025perceive}, and to decision-theoretic and ensemble-based control under imperfect predictions~\cite{lekeufack2024decision,contreras2025safe}, with extensions to conformalized semantic maps~\cite{sundarsingh2026safe}. Adaptive conformal prediction~\cite{gibbs2021adaptive} further relaxes the exchangeability requirement, sustaining long-run coverage under distribution shift, and underlies online conformal schemes for motion planning~\cite{dixit2023adaptive}. Most of these approaches, however, conformalize a scalar score online, whose cost grows with obstacle count and horizon. Moreover, the worst-case score definitions they require to certify safety over a region become intractable to enforce exactly~\cite{mei2025perceive,sundarsingh2026safe}.

In contrast, we adopt the functional CP perspective~\cite{lei2015conformal} and treat the residual distance field as a single element of a function space, conformalized as one object rather than point by point. This directly addresses both limitations. First, region-level safety holds by construction: the envelope covers the whole field, so no per-region worst-case score, the quantity prior CP methods cannot enforce exactly, is ever needed. Second, the functional object is low-rank, which functional PCA makes explicit; the few coefficients that describe it are what the controller conformalizes, so the safety check no longer scales with the number of obstacles.

Most closely related is the egocentric conformal prediction (ECP) framework~\cite{shin2025egocentric}, our ECP-MPC baseline. ECP shares our premise of scoring how much closer obstacles are than predicted, rather than inflating a scalar margin for every obstacle, but it attaches that score to each robot \emph{state} and conformalizes it online. Its uncertainty reasoning therefore runs in the planning loop and scales with the number of states queried, degrading as obstacle density grows (Section~\ref{sec:res}). We instead conformalize the residual field as one functional object computed offline, so the certificate holds for any sampled trajectory (Theorems~\ref{thm:asymptotic_safety}--\ref{thm:asymptotic_safety_afcp}) and the per-step cost is essentially independent of how many obstacles or states the scene contains.

\section{Preliminaries}\label{sec:preliminaries}

We use the following notation throughout the paper: For $f, g \in L^2(\mathcal{X}, d\mathrm{x})$, we denote their inner product by $\langle f, g \rangle = \int_{\mathcal{X}} f(\mathrm{x})g(\mathrm{x}) d\mathrm{x}$. For $\mathcal{O} \subseteq \mathbb{R}^d$ and $\mathrm{x}\in \mathbb{R}^d$, $D(\mathrm{x}, \mathcal{O})$ denotes the closest distance from $\mathrm{x}$ to $\mathcal{O}$: $D(\mathrm{x}, \mathcal{O})\coloneqq \inf_{\mathrm{x}'\in \mathcal{O}} \| \mathrm{x} - \mathrm{x}'\|$.

Our machinery builds on conformal prediction (CP), a distribution-free
framework that turns a heuristic uncertainty score into sets with finite-sample
coverage. We recall  two building blocks: split conformal prediction, which we use offline, and its adaptive variant, which underlies our online update. The functional extension specific to distance fields is developed in Section~\ref{sec:functional_projection}.

\subsection{Split Conformal Prediction}
Given exchangeable samples $\{z_n\}_{n=1}^N$ and a nonconformity score
$s(\cdot)\in\mathbb{R}$ (larger meaning a worse fit), fix a target miscoverage
level $\alpha\in(0,1)$ and set
\begin{equation*}
\hat q = \mathsf{Quantile}_{1-\alpha}\bigl(\{s(z_n)\}_{n=1}^N \cup \{+\infty\}\bigr),
\end{equation*}
where the $+\infty$ term supplies the finite-sample correction. 
For a fresh exchangeable sample $z_{N+1}$,
\begin{equation}
\mathbb{P}\bigl[s(z_{N+1})\le \hat q\bigr]\ge 1-\alpha,
\label{eq:split_cp}
\end{equation}
with no assumption on the data distribution beyond exchangeability~\cite{lei2015conformal}.

\subsection{Adaptive Conformal Prediction}
When the stream is non-exchangeable (e.g.\ under distribution shifts), the marginal
guarantee~\eqref{eq:split_cp} may fail. Adaptive conformal prediction
(ACP)~\cite{gibbs2021adaptive} instead enforces a
\emph{long-run} guarantee by updating the threshold online from the observed
miscoverage: with step size $\gamma>0$ and
$\mathrm{err}_t=\mathbb{I}[\,s_t> \hat q_t\,]$,
\begin{equation}
\hat q_{t+1}=\hat q_t+\gamma\,(\mathrm{err}_t-\alpha),
\label{eq:aci}
\end{equation}
which drives the empirical miscoverage $\tfrac1T\sum_{t}\mathrm{err}_t$ to $\alpha$
for an arbitrary, even adversarial, sequence.

\section{Problem Formulation}\label{sec:prob}

Let $\mathcal{Q}$ be the configuration space of a robot, and  $\mathrm{x} = \mathrm{x}(\mathrm{q}) \in \mathcal{X} \subseteq \mathbb{R}^d$ the position in the (compact) world $\mathcal{X}$  corresponding to $\mathrm{q}\in \mathcal{Q}$ through the forward kinematics, with $d = 2$ or $3$.
We write $\mathcal{A} = \mathcal{A}(\mathrm{q}) \subseteq \mathcal{X}$ for the robot's footprint, a closed set, at configuration $\mathrm{q}$, and $\mathcal{A}_t \coloneqq \mathcal{A}(\mathrm{q}_t)$.
With control input $\mathrm{u} \in \mathcal{U}$, the robot evolves as \[
\mathrm{q}_{t+1} = f(\mathrm{q}_t, \mathrm{u}_t),\quad t = 0, 1, \ldots.
\]
The environment is dynamic. Over a time window $t = 0, \ldots, T$, let $\mathcal{O}_t \subseteq \mathcal{X}$ denote the union of moving obstacles at time $t$, typically realized as an occupancy map and \emph{completely unknown} at planning time.
We model $\{\mathcal{O}_t\}_{t=0}^T$ as a set-valued random process whose randomness captures the environmental variation, and assume each $\mathcal{O}_t$ is independent of $\mathrm{q}_0, \mathrm{u}_0, \ldots, \mathrm{q}_t$, so that the robot's motion does not influence the occupancy.\footnote{Extending this to bidirectional robot--agent interaction is left as future work.}

We are primarily interested in the following problem:
\begin{problem}[Motion Planning with Probabilistic Safety Guarantee]\label{problem:general-motion-planning}

Design a policy $\mathrm{u}_t = \pi(\mathrm{q}_t, \mathcal{H}_t)$ that solves
\begin{align}\label{eq:path-collision-constraint-general}
\min_{\pi}&\quad  \sum_{t=0}^{T-1} \ell(\mathrm{q}_t, \mathrm{u}_t) + \ell(\mathrm{q}_T) \nonumber \\
\text{s.t.}&\quad \mathrm{q}_{t+1} = f(\mathrm{q}_t, \mathrm{u}_t), \quad 0 \leq t < T, \nonumber \\
&\quad \mathbb{P}\left[ \mathcal{A}_t\cap \mathcal{O}_t = \varnothing \right] \geq  1 - \alpha, \quad 1\leq t \leq T.
\end{align}
\end{problem}

Here safety is required at the \emph{per-step} level. That is, each step is collision-free with probability at least $1-\alpha$, which is the guarantee our conformal construction certifies in the closed loop (Theorems~\ref{thm:asymptotic_safety}--\ref{thm:asymptotic_safety_afcp}).
Throughout, we set the conformal miscoverage level to this same $\alpha$, so that a field-coverage guarantee at level $1 - \alpha$ translates directly into per-step safety at level $1 - \alpha$.

The condition $\mathcal{A}_t \cap \mathcal{O}_t = \varnothing$ may be strengthened into
\begin{equation}\label{eq:collision-constraint-general}
\inf_{\mathrm{x} \in \mathcal{A}_t} D(\mathrm{x}, \mathcal{O}_t) \geq \delta
\end{equation}
where $\delta > 0$ is a user-specified margin between $\mathcal{A}_t$ and $\mathcal{O}_t$.
Since $\mathcal{O}_t$ is generally non-convex, exact verification of this condition is intractable even when $\mathcal{A}_t$ is a convex polytope. A typical remedy  is to approximate $\mathcal{O}_t$ or the free space $\mathcal{X} \setminus \mathcal{O}_t$ by a union of convex regions~\cite{zhang2020optimization,deits2015computing}. Instead of inheriting this assumption, we  assume the footprint $\mathcal{A}(\mathrm{q})$ is a ball of radius $r_{\text{robot}}$ centered at $\mathrm{x}(\mathrm{q})$:
\begin{equation*}
\mathcal{A}(\mathrm{q}) = \{\mathrm{x}' \in \mathcal{X}: \left\Vert \mathrm{x}' - \mathrm{x}(\mathrm{q}) \right\Vert \leq r_{\text{robot}}  \}.
\end{equation*}
Then 
\[
\inf_{\mathrm{x} \in \mathcal{A}_t} D(\mathrm{x}, \mathcal{O}_t) = D(\mathrm{x}_t, \mathcal{O}_t) - r_{\text{robot}},
\]
and~\eqref{eq:collision-constraint-general} reduces to
\begin{equation}\label{eq:collision-constraint}
D(\mathrm{x}_t, \mathcal{O}_t) \geq \delta + r_{\text{robot}},
\end{equation}
a single query of $D(\cdot, \mathcal{O}_t)$ at the center $\mathrm{x}_t$.
 The condition~\eqref{eq:path-collision-constraint-general} is now translated into
\begin{equation}\label{eq:path-collision-constraint}
\mathbb{P}\left[ D(\mathrm{x}_t, \mathcal{O}_t) \geq \delta + r_{\text{robot}} \right] \geq 1 - \alpha, \quad 1\leq t \leq T.
\end{equation}
To solve Problem~\ref{problem:general-motion-planning}, the policy $\pi$ should incorporate the information  collected up to $t$, $\mathcal{H}_t \coloneqq(\mathcal{O}_0, \ldots, \mathcal{O}_t)$, to secure future safety while maintaining  deployment efficiency.
A feasible approach is to define $\pi(\mathrm{q}_t, \mathcal{H}_t)$ as the solution of the following MPC problem:
\begin{align}
\min_{\mathbf{q}, \mathbf{u}}&\ \mathfrak{L}(\mathbf{q}, \mathbf{u}, \mathbf{x}^{\text{ref}}) \nonumber \\
\text{s.t.}
&\ \ \mathrm{q}_{t|t} = \mathrm{q}_t, \nonumber \\
&\ \ \mathrm{q}_{t+i+1|t} = f(\mathrm{q}_{t+i|t}, \mathrm{u}_{t+i|t}), \quad 0 \leq i < N, \nonumber\\
&\ \ D(\mathrm{x}_{t+i|t}, \mathcal{O}_{t+i|t}) \geq \delta + r_{\text{robot}}, \quad 1\leq i \leq N, \label{eq:mpc-constraint-naive}
\end{align}
where $(\mathrm{x}_{t+1|t}, \ldots,\mathrm{x}_{t+N|t})$ is a  path planned  at $t$ from $\mathrm{x}_t$ and $\mathcal{H}_t$, and  $(\mathcal{O}_{t+1|t}, \ldots, \mathcal{O}_{t+N|t})$ are \emph{predicted} occupancies at $t+1, \ldots, t+N$ from $\mathcal{H}_t$.\footnote{While $\mathcal{O}_t$ is partially observable in practice, we assume  full observability throughout. Under partial observability, one may instead use a \emph{conservative} estimate $\hat{\mathcal{O}}_t \supseteq \mathcal{O}_t$ that treats occluded and unknown space as occupied, and choose $\mathbf{x}^{\text{ref}}$ to balance exploration and efficiency via a \emph{frontier} method~\cite{yamauchi1998frontier}. Since this only inflates the distance-field estimate, all theoretical claims remain valid at the cost of added conservativeness. A full treatment of partial observability is left as future work.} The objective $\mathfrak{L}$ is defined as
\begin{equation*}
\mathfrak{L}(\mathbf{q}, \mathbf{u}, \mathbf{x}^{\text{ref}}) = \sum_{i=0}^{N-1} \ell(\mathrm{q}_{t+i|t}, \mathrm{u}_{t+i|t}, \mathrm{x}^{\text{ref}}_{t+i|t}) + \ell(\mathrm{q}_{t+N|t}, \mathrm{x}^{\text{ref}}_{t+N|t}).
\end{equation*}

Several approaches are available for computing the predicted occupancies $\mathcal{O}_{t+i|t}$.
One common and mature route is to estimate the kinematic states of dynamic agents by
detection and tracking of the dynamic agents from $\mathcal{H}_t$, then apply motion prediction models~\cite{salzmann2020trajectron,yuan2021agentformer,zhou2022hivt,fu2025moflow,lee2025koopcast} to predict their future motions, which is back-projected into $\mathcal{X}$ to infer $(\mathcal{O}_{t+1|t}, \ldots, \mathcal{O}_{t+N|t})$. An alternative is end-to-end prediction, deploying a model that consumes raw lidar scans and predicts future shapes directly. We adhere to the former.

Since the predictions are imperfect, the associated distance functions may deviate from the true occupancies of the environment. Abbreviating $D_{t+i|t}(\mathrm{x}) \coloneqq D(\mathrm{x}, \mathcal{O}_{t+i|t})$, we define the \emph{score function} as
\begin{equation}
S_{t+i|t} (\mathrm{x}) \coloneqq D_{t+i|t}(\mathrm{x}) - D_{t+i}(\mathrm{x}), \quad \mathrm{x}\in \mathcal{X}.
\end{equation}
When $S_{t+i|t}(\mathrm{x}) > 0$, the predicted distance exceeds the true one: the obstacle is in fact closer than predicted, so the prediction \emph{overestimates} safety at $\mathrm{x}$. Correcting this underestimation of risk is exactly what is needed to enforce~\eqref{eq:path-collision-constraint}, and is the target of our conformal construction.

\noindent
{\bf Remark on what is proven.} The constraint~\eqref{eq:path-collision-constraint} (equivalently the per-step chance constraint in~\eqref{eq:path-collision-constraint-general}) states the desired \emph{per-step} safety semantics, i.e., a bound on the probability of collision at each individual step. Our formal guarantees do \emph{not} certify this per-step probability pointwise in time. Rather, they establish its \emph{long-run empirical} closed-loop counterpart for the receding-horizon controller: the time-averaged frequency of collision-free steps is at least $1-\alpha$ with probability one (Theorems~\ref{thm:asymptotic_safety}--\ref{thm:asymptotic_safety_afcp}).

\section{Functional Conformal Prediction for Provable Distance Function Coverage}
\label{sec:functional_projection}

\subsection{Why Functional Conformal Prediction?}\label{subsec:fcp-why}
In contrast to prior conformal-prediction approaches in robotics, where a single scalar score function is typically considered, the error signal $S_{t+i|t}$ here is \emph{functional}. Each  $S_{t+i\mid t}\colon\mathcal{X}\to\mathbb{R}$ is an element of the Hilbert space $\mathscr{H} := L^2(\mathcal{X})$, not a scalar. 
This motivates a CP framework tailored to such functional objects, namely
functional conformal prediction (FCP)~\cite{lei2015conformal}. Conformalizing
the field as a whole is what yields a certificate that holds for any trajectory
satisfying it, independent of the sampler (Section~\ref{sec:safety}). Its
\emph{practicality}, in turn, rests on two empirical properties of the residual
field, which we verify on the ETH--UCY benchmark~\cite{lerner2007crowds,pellegrini2009you}.

First, the residual field is approximately \emph{time-invariant}: per-cell residual statistics from two temporally disjoint halves of each scene agree closely, so $S_{t+i\mid t}\stackrel{d}{\approx}S_{t'+i\mid t'}$ for $t\neq t'$. Second, the field is \emph{low-rank}: a few functional principal components already capture most of its variance, so it admits the compact expansion:
\begin{equation}
S_{t+i\mid t}(\mathrm{x}) \;\approx\; \bar{S}_i(\mathrm{x}) + \sum_{k=1}^{r}\xi^{(k)}_{t}\,\psi_k(\mathrm{x}), \qquad r\ll\infty,
\label{eq:lowrank-score}
\end{equation}
with FPCA modes $\{\psi_k\}_{k=1}^{r}\subset L^2(\mathcal{X})$ that are stable across $t$. 
The uncertainty is thus spatially structured and compressible, even though it is not reducible to a simple geometric cue such as path curvature or visitation density. 
These two properties, which we quantify on the ETH--UCY benchmark in Section~\ref{sec:res-properties} (Figs.~\ref{fig:fcp_stationarity} and~\ref{fig:fcp_lowrank}), are exactly what make the field-level envelope cheap to deploy. The modes
$\{\psi_k\}$ and the resulting envelope $\mathsf{U}_{t+i|t}$ are estimated once, offline,
while online the controller merely evaluates $\mathsf{U}_{t+i|t}(\mathrm{x})$ point-wise along its
rollouts, keeping the per-step cost low. To be self-contained, we now explain how FCP turns these prediction errors into a safety margin.

\subsection{Sketch of Core Pipeline}
\label{sec:score-comp}

Our goal is to construct a high-probability upper bound $\mathsf{U}_{t+i\mid t}$ for $S_{t+i|t}$:
\begin{equation}\label{eq:coverage-guarantee}
\mathbb{P}\!\left[
S_{t+i\mid t}(\mathrm{x}) \le \mathsf{U}_{t+i\mid t}(\mathrm{x}) \;\forall \mathrm{x}\in\mathcal{X}
\right] \ge 1-\alpha.
\end{equation}
Once such $\mathsf{U}_{t+i|t}$ is identified, it yields a probabilistic lower bound of $D_{t+i}$:
\begin{equation}\label{eq:lower-bound-coverage}
\mathbb{P}\bigl[ D_{t+i}(\mathrm{x}) \geq \underbrace{D_{t+i|t}(\mathrm{x}) - \mathsf{U}_{t+i|t}(\mathrm{x})}_{\coloneqq \mathsf{L}_{t+i|t}(\mathrm{x})} \;\forall  \mathrm{x}\in \mathcal{X}   \bigr] \geq 1 - \alpha.
\end{equation}
We then replace the constraint~\eqref{eq:mpc-constraint-naive} by 
\begin{equation}\label{eq:mpc-constraint-corrected}
\mathsf{L}_{t+i|t}(\mathrm{x}_{t+i|t}) \geq  r_{\text{robot}} + \delta, \quad 1\leq i \leq N.
\end{equation}
However, directly computing such a bound in the infinite-dimensional Hilbert
space $\mathscr{H}$ is generally intractable and often unnecessary.
Instead, we leverage the functional conformal prediction framework
\cite{lei2015conformal} and  bound  a finite-dimensional projection
of $S_{t+i\mid t}$.

We adopt a \emph{divide-and-conquer} strategy. Decompose
\begin{equation}\label{eq:score-decomposition}
S_{t+i|t} = \breve{S}_{t+i\mid t} + \underbrace{(S_{t+i|t} - \breve{S}_{t+i|t})}_{\coloneqq R_{t+i|t}},
\end{equation}
where
$\breve{S}_{t+i|t}$ is a \emph{finite-dimensional} projection of $S_{t+i|t}$, constructed in Section~\ref{sec:fcp_ub}. 
We bound $\breve{S}_{t+i\mid t}$ from above by $\breve{\mathsf{U}}_{t+i|t}$ via FCP, and bound the projection residual $R_{t+i|t}$, which is negligible compared to $\breve{S}_{t+i\mid t}$, uniformly by standard CP: 
\[
\Vert R_{t+i|t}\Vert_{\infty} \leq \varepsilon_i.
\]
Combining the two gives the desired bound on $S_{t+i|t}$:
\begin{equation}\label{eq:final-envelope}
\mathsf{U}_{t+i|t}(\mathrm{x}) \coloneqq \breve{\mathsf{U}}_{t+i|t}(\mathrm{x}) + \varepsilon_i.
\end{equation}

\subsection{FCP-Based Upper Bound}\label{sec:fcp_ub}

The most intricate part of the pipeline is constructing  the finite-dimensional projection $\breve{S}_{t+i|t}$ of $S_{t+i|t}$ and its high-confidence upper bound $\breve{\mathsf{U}}_{t+i|t}$. 
We define a $p_i$-dimensional subspace $\mathscr{V}_i \subset \mathscr{H}$ and let 
\[
\breve{S}_{t+i|t} \coloneqq \Pi_{\mathscr{V}_i}(S_{t+i|t}),
\]
where $\Pi_{\mathscr{V}_i} : \mathscr{H} \to \mathscr{V}_i$ denotes the projection operator.
It then remains to find a confidence set $\mathscr{T}_{t+i|t}$  satisfying a distribution-free coverage guarantee of the form
\begin{equation}
\mathbb{P}\left[
\breve{S}_{t+i|t} \in \mathscr{T}_{t+i\mid t}
\right] \ge 1 - \alpha/2,
\label{eq:functional_projected_coverage}
\end{equation}
where $\mathscr{T}_{t+i\mid t} \subset \mathscr{V}_i$ is a conformal prediction set
 on the projected space.
Below we describe a specific choice of $\mathscr{V}_i$ and how $\breve{\mathsf{U}}_{t+i|t}$ is derived from  $\mathscr{T}_{t+i|t}$ following~\cite{lei2015conformal}.

\subsubsection{Data-driven basis functions}
In our setting, the projection subspace $\mathscr{V}_i\subset\mathscr{H}$
is learned from data at each horizon index $i=1, \ldots, N$ via functional principal components.
Let $\{S^{(n)}_{t+i\mid t}\}_{n=1}^N$ denote a collection of score functions
at step $i$.
We choose an orthonormal basis
$\{\psi_{i,j}\}_{j=1}^{p_i}$ spanning $\mathscr{V}_i$ as the $p_i$ leading principal components
of this  sample, in the standard $L^2$ sense, and collect the basis evaluations at $\mathrm{x}$ into the vector
\begin{equation*}
\psi_i(\mathrm{x}) \coloneqq \bigl(\psi_{i,1}(\mathrm{x}),\ldots,\psi_{i,p_i}(\mathrm{x})\bigr)^\top \in \mathbb{R}^{p_i}.
\end{equation*}
For any $S\in\mathscr{H}$, its coefficient vector in this basis is
\begin{equation}\label{eq:coefficient}
\xi_{i}(S)
\;\coloneqq\;
\bigl(\langle S,\psi_{i,1}\rangle,\ldots,\langle S,\psi_{i,p_i}\rangle\bigr)^\top
\in\mathbb{R}^{p_i},
\end{equation}
so that the projection admits the expansion
\begin{equation*}
(\Pi_{\mathscr{V}_i} S)(\mathrm{x}) \;=\; \sum_{j=1}^{p_i} \xi_{i,j}(S)\,\psi_{i,j}(\mathrm{x}).
\end{equation*}
The dimension $p_i$ of $\mathscr{V}_i$ may be fixed or selected from the sample size and a target approximation accuracy;  we treat it as a modeling parameter.

\paragraph{Inductive conformal prediction in coefficient space}
\label{sec:icp_coeff}

We adopt the inductive conformal predictor (ICP) framework~\cite{lei2015conformal} to conformalize a high-density region in coefficient space.
We split the coefficient samples $\{\xi^{(n)}\}_{n=1}^N$
into a training subset and a calibration subset $\textsf{Cal}$.
A parametric conformity score $g(\xi)$ is fitted on the training subset, and a threshold is chosen using the empirical quantile on the calibration subset.

\paragraph{Gaussian mixture model and ellipsoidal coefficient regions}
\label{sec:gmm_ellipsoids}

We model the distribution of the projected coefficients $\xi \in \mathbb{R}^{p_i}$
using a $K$-component Gaussian mixture model (GMM).
Specifically, we fit the mixture weights $\widehat{\pi}_k$, means
$\widehat{\mu}_k$, and covariance matrices $\widehat{\Sigma}_k$
to the training coefficients and obtain the estimated density
\begin{equation*}
\widehat f(\xi)
\;=\;
\sum_{k=1}^K \widehat{\pi}_k\,
\mathcal{N}\!\left(\xi;\widehat{\mu}_k,\widehat{\Sigma}_k\right),
\end{equation*}
where $\mathcal{N}(\cdot;\mu,\Sigma)$ denotes the multivariate Gaussian density (not to be confused with the FPCA basis functions $\psi_{i,j}$).

\paragraph{Conformity score}
Following the construction in Eqs.~(8)--(9) of \cite{lei2015conformal},
we define a max-component pseudo-density conformity score
from the fitted GMM parameters:
\begin{equation*}
g(\xi)
\;\coloneqq\;
\max_{1 \le k \le K}
\widehat{\pi}_k\,
\mathcal{N}\!\left(\xi;\widehat{\mu}_k,\widehat{\Sigma}_k\right).
\end{equation*}
This score assigns higher conformity to coefficients lying in high-density
regions of at least one mixture component, thereby yielding
ellipsoidal regions in the coefficient space.

Let $\{\xi^{(n)}\}_{n\in\mathsf{Cal}}$ be the coefficients of the calibration set and define
\begin{equation*}
\lambda_i \coloneqq \mathsf{Quantile}_{\alpha/2}\Bigl(\{ g(\xi^{(n)}) \}_{n\in\mathsf{Cal}} \cup \{-\infty\} \Bigr).
\end{equation*}
Then the ICP prediction set in coefficient space is
\begin{equation}\label{eq:projection-coverage}
\begin{gathered}
\mathscr{T}_{t+i\mid t} \coloneqq \{\xi\in\mathbb{R}^{p_i}: g(\xi)\ge \lambda_i\}, \\
\mathbb{P}\bigl[\xi_{t+i\mid t}\in\mathscr{T}_{t+i\mid t}\bigr]\ge 1-\alpha/2,
\end{gathered}
\end{equation}
where $\xi_{t+i\mid t}\coloneqq\xi_i(S_{t+i\mid t})$.
We derive a high-confidence upper bound of $\breve{S}_{t+i|t}$ as
\begin{equation*}
\breve{\mathsf{U}}_{t+i|t}(\mathrm{x}) \coloneqq \sup_{\xi\in\mathscr{T}_{t+i\mid t}} \xi^\top \psi_i(\mathrm{x}), \quad \mathrm{x} \in \mathcal{X}.
\end{equation*}
Since $\breve{S}_{t+i|t}(\mathrm{x})=\xi_{t+i\mid t}^\top \psi_i(\mathrm{x})$, the coverage~\eqref{eq:projection-coverage} immediately gives
\begin{equation}\label{eq:fcp-prob-bound}
\mathbb{P}\left[\breve{S}_{t+i|t}(\mathrm{x})\leq \breve{\mathsf{U}}_{t+i|t}(\mathrm{x})\; \text{for all}\; \mathrm{x} \in \mathcal{X}\right]\ge 1-\frac{\alpha}{2}.
\end{equation}
It turns out that $\breve{\mathsf{U}}_{t+i|t}$ admits a closed-form expression
\begin{equation}\label{eq:upper-envelope-closed}
\breve{\mathsf{U}}_{t+i|t}(\mathrm{x})= \max_{1\leq k\leq K} \left\{ \widehat\mu_k^\top \psi_i(\mathrm{x})
+
r_{k,i}\sqrt{\psi_i(\mathrm{x})^\top \widehat\Sigma_k\, \psi_i(\mathrm{x})} \right\},
\end{equation}
whose detailed derivation is given in Appendix~\ref{app:derivation}.

\subsection{Bounding Projection Residuals}
To translate coefficient-space uncertainty back to the original discretized function,
we account for the projection residual in sup norm.
Define the reconstruction error for sample $y^{(n)}$ by
\[
e^{(n)} \;\coloneqq\; \big\|y^{(n)} - \Pi_{\mathscr{V}_i}  \big( y^{(n)} \big) \big\|_{\infty}.
\]
Using the calibration split, we set the projection residual to the conformal quantile
\begin{equation}
\varepsilon_i \;\coloneqq\; \mathsf{Quantile}_{1-\alpha/2}\bigl( \{e^{(n)}\}_{n\in\mathrm{cal}} \cup \{+\infty\} \bigr),
\label{eq:epsilon_quantile}
\end{equation}
where the $+\infty$ term supplies the finite-sample correction, 
which yields
\begin{equation}\label{eq:projection-res-prob-bound}
\mathbb{P}\left[ |R_{t+i|t}(\mathrm{x})| \leq \varepsilon_i \; \text{for all}\; \mathrm{x}\in \mathcal{X} \right] \geq 1 - \alpha/2.
\end{equation}
By a union bound over the two events~\eqref{eq:fcp-prob-bound} and~\eqref{eq:projection-res-prob-bound}, each holding with probability at least $1-\alpha/2$, the decomposition~\eqref{eq:score-decomposition} and the envelope~\eqref{eq:final-envelope} yield the desired coverage guarantee~\eqref{eq:coverage-guarantee} at level $1-\alpha$.

\noindent
{\bf Remark on practical implementation.}
The projection coefficients $\xi_{i,j}(S)=\langle S,\psi_{i,j}\rangle$ generally require
integration over $\mathcal X$, which is expensive when $S$ is only available through costly
distance-transform evaluations on a fine grid.
Therefore, in implementation we approximate the projection (and its point-wise evaluations)
using only finitely many grid points $\bar{\mathcal X}\subset\mathcal X$. For later use, we define the \emph{discretization resolution} of $\bar{\mathcal{X}}$ as
\begin{equation}\label{eq:resolution}
\delta_d \coloneqq \sup_{\mathrm{x} \in \mathcal{X}} \min_{\bar{\mathrm{x}} \in \bar{\mathcal{X}}} \|\mathrm{x} - \bar{\mathrm{x}} \|,
\end{equation}
which is finite since $\mathcal{X}$ is bounded.

\section{FCP-Informed Safe Motion Planning}
\label{sec:cp_mpc}
We now embed the conformalized distance lower bound into a receding-horizon motion planner, following the  safe-planning template of~\cite{lindemann2023safeplanning} but adapting it to our \emph{fieldwise} representation.
The conformalization is performed over the entire field offline and cached in compact coefficient form; online, the controller neither re-fits the GMM nor recomputes per-point distances, but consults the cached field and applies a lightweight scalar update that also absorbs any deployment-time distribution shift (Section~\ref{subsec:online_update}). Beyond keeping the per-step cost low, this hands the planner a continuous, queryable safety field, directly suited to scoring the many candidate rollouts of an MPC step. Figure~\ref{fig:overview} summarizes the pipeline.

\begin{figure*}[t]
  \centering
  \includegraphics[width=0.9\linewidth]{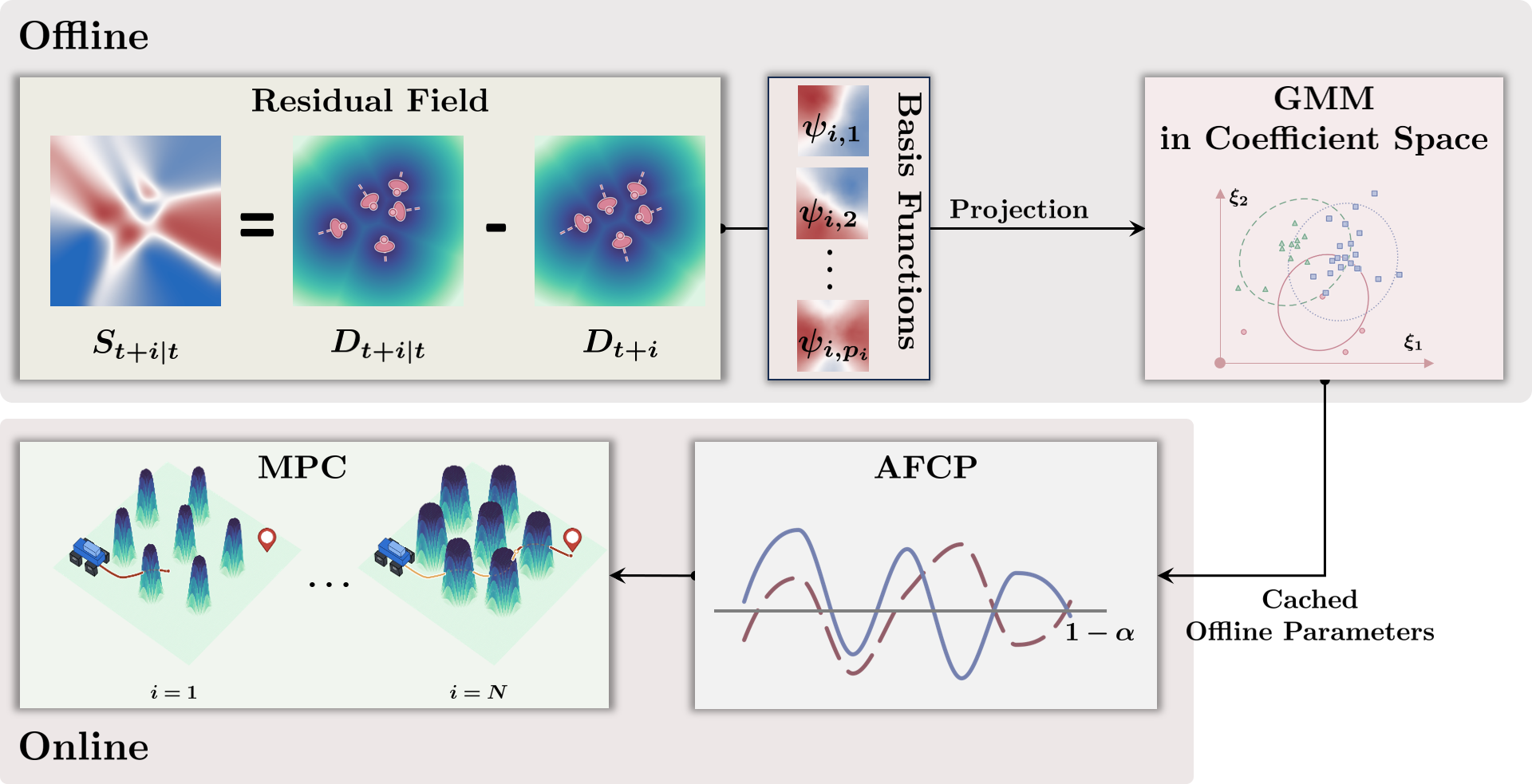}
  \caption{In the offline stage, snapshots of residual distance fields are collected, and a set of basis functions is obtained via functional PCA. This induces a projection of each residual field onto a low-dimensional coefficient space, in which the distribution of the projected points is approximated by a GMM and FCP is performed. In the online stage, AFCP-based correction is applied to handle test-time distribution shifts, and the resulting conformalized distance field is incorporated into the MPC problem.}
  \label{fig:overview}
\end{figure*}

\subsection{MPC Formulation}
\label{subsec:mpc_formulation}
Incorporating the conformalized constraint~\eqref{eq:mpc-constraint-corrected} into~\eqref{eq:mpc-constraint-naive} leads to the following MPC formulation:
\begin{align}
\min_{\mathbf{q}, \mathbf{u}}&\ \ \mathfrak{L}(\mathbf{q}, \mathbf{u}, \mathbf{x}^{\text{ref}}) \nonumber \\
\text{s.t.}
&\ \ \mathrm{q}_{t|t} = \mathrm{q}_t, \nonumber \\
&\ \ \mathrm{q}_{t+i+1|t} = f(\mathrm{q}_{t+i|t}, \mathrm{u}_{t+i|t}), \quad 0 \leq i < N, \nonumber\\
&\ \ \mathsf{L}_{t+i|t}(\bar{\mathrm{x}}_{t+i|t}) \geq r_{\text{robot}} + \delta + \delta_{d}, \quad 1\leq i \leq N,\label{eq:mpc-constraint-fcp}
\end{align}
where $\delta_d > 0$ is the discretization resolution of $\mathcal{X}$ defined in~\eqref{eq:resolution}, and $\bar{\mathrm{x}}_{t+i|t}$ denotes the \emph{projection} of $\mathrm{x}_{t+i|t} = \mathrm{x}(\mathrm{q}_{t+i|t})$ onto the finite set $\bar{\mathcal{X}}$:
\begin{equation*}
\bar{\mathrm{x}}_{t+i|t} \coloneqq \argmin_{\bar{\mathrm{x}} \in \bar{\mathcal{X}}} \| \bar{\mathrm{x}} - \mathrm{x}_{t+i|t} \|. 
\end{equation*}
To solve this, a sampling-based MPC, such as model predictive path integral (MPPI)~\cite{williams2017information}, can be used as an affordable implementation.\footnote{In implementation we enforce a horizon-dependent margin
$\rho_i = r_{\text{robot}}+\delta+\delta_d-\Delta_i$ with
$\Delta_i=\tfrac12\,a_{\mathrm{lat}}\bigl((i-1)\Delta t\bigr)^2$
($a_{\mathrm{lat}}=v_{\max}\omega_{\max}$ in 2D), the lateral distance the robot can
deviate by step $i$ (cf.~\cite{fraichard2004inevitable}). Subtracting it relaxes the
margin at far horizons, where the robot can still steer clear. Since $\Delta_1=0$,
the applied step keeps full clearance, so the guarantees of Section~\ref{sec:safety}
are unaffected.}

\noindent
{\bf Hard constraints vs. soft penalization.}
While theoretically favorable, the conformalized constraints~\eqref{eq:mpc-constraint-fcp} may still produce extremely small feasible sets when $i$ is large. Moreover, the fieldwise conformalization is more than a binary feasibility test: at every point it furnishes a signed, continuous margin.
This motivates the notion of \emph{soft constraints}~\cite{scokaert1999feasibility,kerrigan2000soft}. Instead of enforcing the hard constraints explicitly, we fold them into the MPC objective as a graded term that penalizes violations. 
Specifically, we define
\begin{equation}
g_{t+i|t}(\mathrm{x}):=  r_{\text{robot}} +\delta + \delta_{d} - \mathsf{L}_{t+i|t}(\mathrm{x}),
\label{eq:soft_margin}
\end{equation}
so that the constraint~\eqref{eq:mpc-constraint-fcp} is identical to $g_{t+i|t}(\mathrm{x})\le 0$.  
The soft version of the MPC is then 
\begin{align}\label{eq:soft-mpc}
\min_{\mathbf{q}, \mathbf{u}}&\ \mathfrak{L}_{w}(\mathbf{q}, \mathbf{u}, \mathbf{x}^{\text{ref}}) \\
\text{s.t.}
&\ \ \mathrm{q}_{t|t} = \mathrm{q}_t, \nonumber \\
&\ \ \mathrm{q}_{t+i+1|t} = f(\mathrm{q}_{t+i|t}, \mathrm{u}_{t+i|t}), \quad 0 \leq i < N, \nonumber
\end{align}
where $\mathfrak{L}_{w}$ is defined as
\begin{equation*}
\mathfrak{L}_{w}(\mathbf{q}, \mathbf{u}, \mathbf{x}^{\text{ref}}) = \mathfrak{L}(\mathbf{q}, \mathbf{u}, \mathbf{x}^{\text{ref}}) + w \sum_{i=1}^N \left[ g_{t+i|t}(\bar{\mathrm{x}}_{t+i|t}) \right]^2_+
\end{equation*}
for a user-defined constant $w > 0$. The penalty term discourages incursions into the conformalized unsafe region ($g_{t+i|t}>0$). The weight $w>0$ is fixed rather than adapted, so~\eqref{eq:soft-mpc} is a quadratic-penalty relaxation~\cite{nocedal2006numerical} of its hard counterpart~\eqref{eq:mpc-constraint-fcp}: its minimizer approaches the hard-constrained solution as $w\to\infty$. For finite $w$, it stays well-defined even where the hard feasible set is empty, replacing the strict closed-loop guarantee with one that holds up to a controllable slack vanishing as $w\to\infty$ (Theorem~\ref{thm:soft_safety}). We compare both formulations in our 2D pedestrian experiments.

\subsection{Stage 1: Offline Initialization}
\label{subsec:offline_online_cp}
The key design principle is to separate the field-level \emph{statistical initialization}, computed once offline, from the \emph{lightweight adaptation} performed online.
Offline, we exploit a large calibration set and a flexible GMM to construct the most accurate initial envelope possible, a cost paid only \emph{once}, before deployment.
Online, the GMM is never evaluated; the controller instead adapts the cached envelope directly from streaming residual observations and a prescribed target violation level.

During the offline stage, we assume access to $n_{\mathsf{Cal}}$ independent snapshots of the environment,
\begin{equation*}
\left\{ \mathcal{O}^{(j)}_t : 0 \leq t \leq T,\; 1 \leq j \leq  n_{\mathsf{Cal}}  \right\}
\end{equation*}
from which we obtain the dataset for the $i$-th planning step:
\begin{equation*}
\mathcal{D}_{\mathsf{Cal},i} \coloneqq
\left\{S_{t+i|t}^{(j)}: 0\leq t \leq T - i, 1 \leq j \leq n_{\mathsf{Cal}} \right\}.
\end{equation*}
We then apply FPCA to obtain an orthonormal basis
$
\{\psi_{i,1},\dots,\psi_{i,p_i}\},
$
which produces the coefficient vector of each $S \in \mathcal{D}_{\mathsf{Cal}, i}$ according to~\eqref{eq:coefficient}.

\noindent
{\bf Remark on exchangeability of the calibration set.} The split-conformal coverage~\eqref{eq:split_cp} assumes the calibration and test residual fields are exchangeable. 
In practice the calibration fields $\{S^{(j)}_{t+i\mid t}\}$ are extracted from overlapping prediction windows of a limited number of recordings, so temporally adjacent samples are correlated and exchangeability holds only approximately. 
We mitigate this by subsampling across time when forming $\mathcal{D}_{\mathsf{Cal},i}$ to decorrelate successive fields. Furthermore,  since the formal offline guarantee heavily relies on this assumption, we treat the field coverage of Appendix~\ref{app:field_coverage} as the operative empirical check. 
The online adaptive update, which makes no exchangeability assumption, is a further safeguard against any residual violation.

Using the coefficient dataset
$
\{\xi_i(S_{t+i|t}^{(j)})\}_{j=1}^{n_{\mathsf{Cal}}},
$
we fit a Gaussian mixture model and apply the coefficient-space conformal construction of Section~\ref{sec:gmm_ellipsoids}.
This yields the mixture parameters $\{\widehat\pi_{i,k},\widehat\mu_{i,k},\widehat\Sigma_{i,k}\}_{k=1}^K$, the conformal level $\lambda_i$ (and hence the per-component radii $r_{i,k}\equiv r_{i,k}(\lambda_i)$ of~\eqref{eq:ellipsoid_def}), and the projection residual $\varepsilon_i$.

For each horizon index $i$, we cache the offline tuple
\begin{equation}
\Pi_i
\;\coloneqq\;
\Bigl(
\{\psi_{i,m}\}_{m=1}^{p_i},\,
\{\widehat\pi_{i,k},\widehat\mu_{i,k},\widehat\Sigma_{i,k}\}_{k=1}^K,\,
\lambda_i,\,
\varepsilon_i
\Bigr),
\label{eq:Pi_i_tuple}
\end{equation}
and denote by
\begin{equation}
\boldsymbol{\Pi}\coloneqq\{\Pi_i\}_{i=1}^N
\label{eq:Pi_collection}
\end{equation}
the full collection of horizon-indexed tuples cached by the controller.
Crucially, after deployment the online controller only \emph{evaluates} the closed-form envelope~\eqref{eq:upper-envelope-closed} from these cached parameters and never \emph{re-fits} the GMM. 
In practice, two grids $\widehat\mu_{i,k}^\top\psi_i(\cdot)$ and $(\psi_i(\cdot)^\top\widehat\Sigma_{i,k}\psi_i(\cdot))^{1/2}$ are precomputed once, so an envelope update reduces to recombining them with the (scalar-)adapted radii.

\subsection{Stage 2: Online Adaptation}\label{subsec:online_update}

Once the robot starts moving, the cached envelope is adapted online through a single scalar per horizon index, in the spirit of adaptive conformal prediction~\cite{gibbs2021adaptive}. Because the upper envelope~\eqref{eq:upper-envelope-closed} is controlled by the radii $r_{i,k}$, we introduce an adjustable multiplier $c_{t+i|t}$ on those radii, and define the \emph{parameterized} upper envelope 
\begin{align*}
\mathsf{U}_{t+i|t}(\mathrm{x}; c) \coloneqq \varepsilon_i + \max_{k} \left\{
\widehat\mu_{i,k}^{\top}\psi_i(\mathrm{x})
+
c \cdot r_{i,k}
\sqrt{\psi_i(\mathrm{x})^\top\widehat\Sigma_{i,k}\psi_i(\mathrm{x})}
\right\}
\end{align*}
for $c\ge 0$ (and $\mathsf{U}_{t+i|t}(\mathrm{x}; c) \coloneqq \mathsf{U}_{t+i|t}(\mathrm{x}; 0)$ for $c<0$). 
 The original envelope is recovered at $c=1$: $\mathsf{U}_{t+i|t}(\mathrm{x}) = \mathsf{U}_{t+i|t}(\mathrm{x}; 1)$. Defining the smallest multiplier that covers the realized field,
\begin{equation}\label{eq:multiplier-lower-bound}
a_{t+i|t} \coloneqq
\inf\left\{c\ge 0:S_{t+i|t}(\mathrm{x})\le
\mathsf{U}_{t+i|t}(\mathrm{x}; c)\;\forall \mathrm{x}\in \bar{\mathcal{X}} \right\},
\end{equation}
we take the violation indicator
\begin{equation}\label{eq:online_coeff}
\mathrm{err}^{(t)}_i \coloneqq \mathbb{I}\left[ c_{t+i|t} < a_{t+i|t}  \right],
\end{equation}
which, by~\eqref{eq:multiplier-lower-bound}, is equivalent to
\begin{equation*}
\mathrm{err}^{(t)}_i = \mathbb{I}\!\left[\exists \mathrm{x}\in\bar{\mathcal X}:\ S_{t+i|t}(\mathrm{x}) > \mathsf{U}_{t+i|t}(\mathrm{x}; c_{t+i|t})\right].
\end{equation*}
This keeps the band a valid envelope at all times and yields a clean coverage event. 
At time $t$, once a realized residual field $S_{t+i|t}$ matures for horizon index $i$, we evaluate the indicator~\eqref{eq:online_coeff} on the discretization grid $\bar{\mathcal X}$ alone (cost $O(K p_i |\bar{\mathcal X}|)$, with no full-field reconstruction and no GMM re-fit), and update the multiplier as
\begin{equation}
c_{t+1+i|t+1} =  c_{t+i|t} + \gamma\,\bigl( \mathrm{err}^{(t)}_i - \alpha\bigr).
\label{eq:acp_update}
\end{equation}
Here $c_{t+i|t}$ is the multiplier applied to horizon $i$ at planning time $t$; once the corresponding residual matures, it is updated to $c_{t+1+i|t+1}$ for use at the next planning time. 
The envelope inflates when the realized miscoverage exceeds $\alpha$ and contracts otherwise, driving the long-run miscoverage to $\alpha$. Note that the residual of a prediction made at time \(t\) for horizon
 \(i\) becomes available only at time \(t+i\), after the 
environment state has been observed; the AFCP update therefore uses
delayed feedback, without access to future information.
The corresponding online envelope simply rescales the cached radii,
\begin{align}
 \mathsf{U}^{\text{online}}_{t+i|t}(\mathrm{x})=\varepsilon_i 
 +\max_{k}\left\{\widehat\mu_{i,k}^\top\psi_i(\mathrm{x})+c_{t+i|t}\,r_{i,k}\sqrt{\psi_i(\mathrm{x})^\top\widehat\Sigma_{i,k}\,\psi_i(\mathrm{x})}\right\}.
\label{eq:Ui_online}
\end{align}
Thus, offline GMM fitting only initializes the cached parameters, while online adaptation moves a single scalar $c_{t+i|t}$ (initialized at $c_{i|0}=1$), recomputed against the precomputed grids of~\eqref{eq:Pi_i_tuple} without re-evaluating the GMM.

The following theorem is an immediate consequence of~\cite[Proposition 4.1]{gibbs2021adaptive}.
\begin{theorem}\label{thm:acp-coverage}
The following asymptotic coverage guarantee holds:
\begin{equation*}
\lim_{T\to\infty}\frac{1}{T} \sum_{t=0}^{T-1}\mathbb{I}\left[ S_{t+i|t}(\mathrm{x}) \leq \mathsf{U}^{\text{online}}_{t+i|t}(\mathrm{x}) \; \forall \mathrm{x}\in \bar{\mathcal{X}} \right] = 1 -\alpha.
\end{equation*}
\end{theorem}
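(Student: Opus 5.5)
The plan is to reduce the statement to the standard ACP argument of~\cite[Proposition 4.1]{gibbs2021adaptive} applied to the scalar sequence $\{c_{t+i|t}\}_t$ for a fixed horizon index $i$. First, by the equivalent form of the violation indicator below~\eqref{eq:online_coeff}, the indicator inside the sum equals $1-\mathrm{err}^{(t)}_i$, because $\mathsf{U}^{\text{online}}_{t+i|t}(\cdot)=\mathsf{U}_{t+i|t}(\cdot;c_{t+i|t})$ by~\eqref{eq:Ui_online}. This equivalence needs $c\mapsto \mathsf{U}_{t+i|t}(\mathrm{x};c)$ to be nondecreasing and continuous. Nondecreasing holds since $r_{i,k}\ge0$ and the square root is nonnegative, and the extension to $c<0$ is constant. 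Continuity on the finite grid $\bar{\mathcal X}$ makes the infimum in~\eqref{eq:multiplier-lower-bound} attained. It then suffices to show $\frac1T\sum_{t<T}\mathrm{err}^{(t)}_i\to\alpha$.

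Second, I would establish boundedness of the iterates: there is $M<\infty$ with $c_{t+i|t}\in[-\gamma\alpha,\,M+\gamma(1-\alpha)]$ for all $t$. The upper threshold $M$ should be a value beyond which no violation can occur. One takes $M\ge\sup_t a_{t+i|t}$, which is finite because the residual fields $S_{t+i|t}$ are bounded on the compact $\mathcal X$ (distances are at most $\operatorname{diam}\mathcal X$). It is also finite provided $\sqrt{\psi_i^\top\widehat\Sigma_{i,k}\psi_i}\,r_{i,k}>0$ on $\bar{\mathcal X}$; I would state this as a mild nondegeneracy condition, which holds since $\widehat\Sigma_{i,k}\succ0$ and $\psi_i(\mathrm{x})\neq0$.

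The induction for the bounds runs as follows. If $c>M$, then err$=0$ and $c$ decreases by $\gamma\alpha$. If $c<0$, then $\mathsf U(\cdot;c)=\mathsf U(\cdot;0)$ and $a\ge0>c$, so err$=1$ and $c$ increases. Hence the iterates never escape the stated interval.

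Third, I would telescope~\eqref{eq:acp_update} to get
\[
c_{T+i|T}-c_{i|0}=\gamma\sum_{t=0}^{T-1}\bigl(\mathrm{err}^{(t)}_i-\alpha\bigr).
\]
This yields $\bigl|\frac1T\sum_t\mathrm{err}^{(t)}_i-\alpha\bigr|\le \frac{M+\gamma+1}{\gamma T}\to0$ deterministically, for any (even adversarial) sequence of fields. The main subtlety, and the step I expect to be the obstacle, is the delayed feedback: $\mathrm{err}^{(t)}_i$ is only observed at time $t+i$. I would handle it by treating the update for index $i$ as a recursion indexed by $t$ whose application is lagged by $i$ steps. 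The telescoping identity and the boundedness argument only use the recursion itself, not when it is computed, so the lag changes nothing asymptotically. At worst the $i$ pending updates contribute an $O(i\gamma/T)$ term, which vanishes.
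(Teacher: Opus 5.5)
Your route is the paper's: its proof is the bare reduction to Gibbs--Cand\`es, Prop.~4.1, plus the remark that $a_{t+i|t}$ is finite. But Step~1 has a real hole on the lower side. The identity ``coverage indicator $=1-\mathrm{err}^{(t)}_i$'' fails on steps with $c_{t+i|t}<0$. There $\mathrm{err}^{(t)}_i=\mathbb{I}[c_{t+i|t}<a_{t+i|t}]=1$ automatically, which is exactly what you use to keep $c\ge-\gamma\alpha$. Yet the clamp makes $c\mapsto\mathsf{U}_{t+i|t}(\cdot;c)$ constant on $(-\infty,0]$, so the field is actually covered whenever $a_{t+i|t}=0$. (Under the literal, unclamped reading of \eqref{eq:Ui_online}, it is covered whenever it lies far enough below $\mathsf{U}_{t+i|t}(\cdot;0)$.) Monotonicity and continuity give the equivalence of the two forms of the indicator only for $c\ge 0$. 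Your telescoping therefore proves $\frac1T\sum_t\mathrm{err}^{(t)}_i\to\alpha$, but only
\[
\liminf_{T\to\infty}\frac{1}{T}\sum_{t=0}^{T-1}\mathbb{I}\bigl[S_{t+i|t}(\mathrm{x})\le\mathsf{U}^{\text{online}}_{t+i|t}(\mathrm{x})\ \forall\mathrm{x}\in\bar{\mathcal X}\bigr]\ge 1-\alpha .
\]
The excess over $1-\alpha$ is the asymptotic frequency of these covered-but-counted-as-error steps, and it need not vanish. Suppose $a_{t+i|t}=0$ for every $t$, i.e., the mean-plus-slack envelope $\mathsf{U}_{t+i|t}(\cdot;0)$ always covers. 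Then $c$ oscillates about $0$ with error frequency $\alpha$ while every step is covered, so the limit is $1$, not $1-\alpha$. Switching to the $\exists$-form of err restores the identity but destroys your lower bound: err can then vanish for every $c<0$, so $c\to-\infty$ and the telescoping estimate fails.

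What is missing is the feature of the Gibbs--Cand\`es setting that an extreme threshold \emph{forces} miscoverage (their empty prediction set). The clamped envelope has no such level, and the paper's one-line reduction passes over the same point. To get equality you must add this ingredient. One option is to set $\mathsf{U}^{\text{online}}_{t+i|t}\equiv-\infty$ when $c_{t+i|t}<0$. Another is to drop the clamp under a nondegeneracy condition strong enough that very negative $c$ forces a violation. Otherwise, state the result as the $\liminf$ inequality above, which is all the proofs of Theorems~\ref{thm:asymptotic_safety}, \ref{thm:asymptotic_safety_afcp}, and~\ref{thm:soft_safety} actually use.

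Two smaller points. First, your upper-side nondegeneracy caveat is a genuine hypothesis that the paper's remark also silently needs. Where $\max_k r_{i,k}\sqrt{\psi_i(\mathrm{x})^\top\widehat\Sigma_{i,k}\psi_i(\mathrm{x})}=0$, the envelope does not grow with $c$, and $a_{t+i|t}=+\infty$ is possible however bounded $S$ is. But it must be assumed rather than derived: neither $\psi_i(\mathrm{x})\neq 0$ on all of $\bar{\mathcal X}$ nor $\max_k r_{i,k}>0$ follows from the construction.

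Second, for $i\ge 2$ the lag is not invisible to your argument. As written, \eqref{eq:acp_update} needs $\mathrm{err}^{(t)}_i$, known only at time $t+i$, to form the multiplier used at $t+1$. Writing $c_t$ for $c_{t+i|t}$, the recursion one can actually run is $c_{t+1}=c_t+\gamma(\mathrm{err}^{(t+1-i)}_i-\alpha)$. There the applied error was evaluated at $c_{t+1-i}$, which can differ from $c_t$ by up to $(i-1)\gamma$. Both of your thresholds must therefore be widened by $(i-1)\gamma$ before the boundedness induction goes through. The conclusion survives this, and for $i=1$, the only index the safety theorems use, no lag arises.
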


\noindent
The exact equality is inherited from~\cite[Proposition 4.1]{gibbs2021adaptive} under its standard
regularity conditions, namely a bounded conformity score, equivalently a bounded
 multiplier $c_{t+i\mid t}$. 
 These hold here because $\bar{\mathcal X}$ is finite and each $S_{t+i|t}$ is bounded on it, so the required multiplier $a_{t+i|t}$ in~\eqref{eq:multiplier-lower-bound} is finite and the recursion~\eqref{eq:acp_update} cannot drift without bound.

The same adaptive-conformal argument applies to the \emph{projection slack}
$\varepsilon_i$, which inflates the envelope uniformly (out of the FPCA subspace)
rather than along the in-subspace directions controlled by $c$. Holding the
in-subspace parameters fixed, write
\begin{equation*}
\mathsf{U}_{t+i|t}(\mathrm{x};\varepsilon)
:= \varepsilon + \max_{k}\Bigl\{\widehat\mu_{i,k}^\top\psi_i(\mathrm{x})
+ r_{i,k}\sqrt{\psi_i(\mathrm{x})^\top\widehat\Sigma_{i,k}\psi_i(\mathrm{x})}\Bigr\},
\end{equation*}
form the field-violation indicator
\[
\mathrm{err}^{\varepsilon,(t)}_i := \mathbb{I}\!\left[\exists\mathrm{x}\in\bar{\mathcal X}:
S_{t+i|t}(\mathrm{x}) > \mathsf{U}_{t+i|t}(\mathrm{x};\varepsilon_{t+i|t})\right],
\]
and update
\begin{equation}
\varepsilon_{t+1+i|t+1}
= \max \bigl\{0,\ \varepsilon_{t+i|t} + \gamma\,(\mathrm{err}^{\varepsilon,(t)}_i-\alpha)\bigr\}.
\label{eq:eps_update}
\end{equation}

\begin{corollary}\label{cor:eps-coverage}
With the in-subspace parameters held fixed and $\varepsilon$ updated by
\eqref{eq:eps_update}, the realized field coverage attains its nominal level:
\begin{equation*}
\lim_{T\to\infty}\frac{1}{T}\sum_{t=0}^{T-1}
\mathbb{I}\!\left[S_{t+i|t}(\mathrm{x})\le \mathsf{U}_{t+i|t}(\mathrm{x};\varepsilon_{t+i|t})
\ \forall\mathrm{x}\in\bar{\mathcal X}\right]=1-\alpha.
\end{equation*}
\end{corollary}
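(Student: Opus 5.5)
The plan is to mirror the argument behind Theorem~\ref{thm:acp-coverage}: view $\varepsilon_{t+i|t}$ as an adaptive threshold on a scalar conformity score, and then apply the bounded-iterate telescoping argument of~\cite[Proposition 4.1]{gibbs2021adaptive}. Fix the horizon index $i$ and define the scalar score
\[
s_t \coloneqq \max_{\mathrm{x}\in\bar{\mathcal X}}\Bigl(S_{t+i|t}(\mathrm{x}) - \max_k\bigl\{\widehat\mu_{i,k}^\top\psi_i(\mathrm{x}) + r_{i,k}\sqrt{\psi_i(\mathrm{x})^\top\widehat\Sigma_{i,k}\psi_i(\mathrm{x})}\bigr\}\Bigr).
\]
Because $\mathsf{U}_{t+i|t}(\mathrm{x};\varepsilon)$ is just $\varepsilon$ plus a fixed field, the violation event $\exists\mathrm{x}: S_{t+i|t}(\mathrm{x})>\mathsf{U}_{t+i|t}(\mathrm{x};\varepsilon_{t+i|t})$ is exactly $s_t>\varepsilon_{t+i|t}$. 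Hence $\mathrm{err}^{\varepsilon,(t)}_i=\mathbb{I}[s_t>\varepsilon_{t+i|t}]$, and~\eqref{eq:eps_update} is precisely the ACP recursion~\eqref{eq:aci} for the threshold $\hat q_t=\varepsilon_{t+i|t}$, apart from the projection $\max\{0,\cdot\}$.

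Next I bound $s_t$. The grid $\bar{\mathcal X}$ is finite, $\mathcal X$ is compact, and the distance fields are bounded by $\mathrm{diam}(\mathcal X)$, so each $S_{t+i|t}$ is bounded. The cached envelope is a fixed finite function on $\bar{\mathcal X}$, which gives $|s_t|\le B$ for some deterministic $B$. From this I show the iterates stay in $[0,B+\gamma]$:
\begin{itemize}
\item If $\varepsilon_{t+i|t}> B$, then $\mathrm{err}=0$, so the update decreases $\varepsilon$ by $\gamma\alpha$.
\item Otherwise the increment is at most $\gamma(1-\alpha)\le\gamma$.
\item The lower bound $0$ is enforced by the clipping.
\end{itemize}

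For the telescoping step, write $\varepsilon_{t+1}=\varepsilon_t+\gamma(\mathrm{err}_t-\alpha)+\eta_t$, where $\eta_t\ge0$ is the clipping correction. Summing over $t$ gives
\[
\frac1T\sum_{t<T}(\mathrm{err}_t-\alpha)=\frac{\varepsilon_T-\varepsilon_0-\sum_t\eta_t}{\gamma T}.
\]
The obstacle is $\sum_t\eta_t$, which is the step where the clipping departs from plain ACP. I handle it by noting that clipping is active only when $\varepsilon_t+\gamma(\mathrm{err}_t-\alpha)<0$. Since $\varepsilon_t\ge0$, this forces $\mathrm{err}_t=0$. But $\mathrm{err}_t=0$ means $s_t\le\varepsilon_t$, and so the clipping can only occur when $\varepsilon_t<\gamma\alpha$.

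There are two ways to make this term vanish. The preferred route, if $s_t\ge 0$ is available, is as follows. By the definition of $\varepsilon_i$ as the quantile of sup-norm residuals, and since the mean term $\bar S$ lies in the span, the score is typically nonnegative. Then $\varepsilon\ge0$ is never binding in the sense that the unclipped iterate already stays above $\min\{\varepsilon_0,\,0\}$ minus $\gamma\alpha$. The fallback is to bound directly: each clipping event contributes at most $\gamma\alpha$, and a clipping requires the iterate to have just descended into $[0,\gamma\alpha)$. Each return after such a descent costs upward steps that are already counted in $\sum\mathrm{err}_t$. This yields $\sum_t\eta_t$ comparable to the nonnegative drift, and one can absorb it by the same reflected-walk argument as in Gibbs--Cand\`es.

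The key remaining difficulty is this clipping step. If it can be shown that $s_t\ge0$, or alternatively that the projection contributes $o(T)$, then the right-hand side is $O(1/T)\to0$. That gives $\frac1T\sum_t\mathrm{err}_t\to\alpha$, and hence coverage frequency $1-\alpha$, deterministically for any sequence.
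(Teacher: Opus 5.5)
Your reduction is the same as the paper's, whose proof is a one-line appeal to \cite[Prop.~4.1]{gibbs2021adaptive}. Your score $s_t$, the bound $|s_t|\le B$, and the telescoping identity with clipping correction $\eta_t\ge 0$ are all correct. The gap is exactly the step you flag, and neither of your two fixes closes it.

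\textbf{The route via $s_t\ge0$ fails.} Take any stream with $s_t\equiv 0$. The clip keeps $\varepsilon_{t+i|t}\ge 0$, so every step has $\mathrm{err}^{\varepsilon,(t)}_i=\mathbb{I}[0>\varepsilon_{t+i|t}]=0$. The iterate therefore falls by $\gamma\alpha$ per step until it reaches $0$, and then sits there with $\eta_t=\gamma\alpha$ at every step. Hence $\sum_{t<T}\eta_t=\Theta(T)$, there are no upward excursions for your fallback argument to charge the clipping against, and the coverage frequency is $1$, not $1-\alpha$.

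\textbf{More generally,} $\varepsilon_{t+i|t}\ge0$ gives $\mathrm{err}^{\varepsilon,(t)}_i\le\mathbb{I}[s_t>0]$. So whenever the envelope $\mathsf{U}_{t+i|t}(\cdot;0)$ alone covers the field on more than a $1-\alpha$ fraction of steps in the long run, the realized coverage strictly exceeds $1-\alpha$. The stated equality therefore does not follow from \eqref{eq:eps_update} for an arbitrary stream. Prop.~4.1 of Gibbs--Cand\`es concerns the unprojected recursion, and the paper's one-line proof passes over the same $\max\{0,\cdot\}$.

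\textbf{What your telescoping does prove} is the one-sided bound, deterministically. Since $\eta_t\ge 0$,
\[
\frac1T\sum_{t<T}\bigl(\mathrm{err}^{\varepsilon,(t)}_i-\alpha\bigr)
=\frac{\varepsilon_{T+i|T}-\varepsilon_{i|0}-\sum_{t<T}\eta_t}{\gamma T}
\le\frac{\max\{\varepsilon_{i|0},\,B+\gamma\}}{\gamma T}.
\]
So the $\liminf$ of the coverage frequency is at least $1-\alpha$, which is all the safety arguments of Section~\ref{sec:safety} actually consume. Note that the iterate bound must be $\max\{\varepsilon_{i|0},B+\gamma\}$ rather than $B+\gamma$, since the offline initial value $\varepsilon_{i|0}=\varepsilon_i$ may exceed $B$.

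To obtain the equality you need an extra ingredient forcing $\sum_{t<T}\eta_t=o(T)$. One option is to remove the projection. Then the iterates stay in $[-B-\gamma\alpha,\max\{\varepsilon_{i|0},B+\gamma\}]$, and the two-sided bound $\bigl|\frac1T\sum_{t<T}\mathrm{err}^{\varepsilon,(t)}_i-\alpha\bigr|=O(1/T)$ follows exactly as in Gibbs--Cand\`es. The other option is a hypothesis such as $s_t\ge\gamma\alpha$ for all large $t$. This forces a violation whenever $\varepsilon_{t+i|t}<\gamma\alpha$, so the clip is eventually inactive.
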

\begin{proof}
Identical to Theorem~\ref{thm:acp-coverage}: \eqref{eq:eps_update} is the adaptive
conformal recursion of~\cite[Prop.~4.1]{gibbs2021adaptive} with the projection slack
$\varepsilon$ as the adapted threshold and the field-violation event as the score,
so the long-run miscoverage converges to $\alpha$ for an arbitrary deployment stream.
\end{proof}

\noindent
This $\varepsilon$-adaptation is the variant we deploy under the deployment-time density shift of Section~\ref{subsec:density-shift}, where the in-subspace fit still transfers but the projection slack must grow. 
Algorithm~\ref{alg:fcp_mpc_acp} summarizes the resulting procedure.

\begin{algorithm}[t]
\caption{Functional CP-informed MPC with offline GMM initialization and optional online adaptive update}
\label{alg:fcp_mpc_acp}
\begin{algorithmic}[1]
\STATE \textbf{Input:} Calibration set $\{\mathcal D_{\mathsf{Cal},i}\}_{i=1}^N$, margin $\delta$, robot radius $r_{\mathrm{robot}}$, discretized workspace $\bar{\mathcal{X}}$, target violation level $\alpha$.
\STATE \textbf{Offline initialization:}
\FOR{each horizon index $i=1,\dots,N$}
    \STATE Compute residual-field samples $S_i^{(j)}(\mathrm{x})=D_{t+i|t}^{(j)}(\mathrm{x})-D_{t+i}^{(j)}(\mathrm{x})$ for all $1\leq j\leq n_{\mathsf{Cal}}$;
    \STATE Apply FPCA and extract basis $\{\psi_{i,m}\}_{m=1}^{p_i}$;
    \STATE Project residual fields to obtain coefficient vectors $\xi_i^{(j)}$;
    \STATE Fit a GMM in coefficient space and conformalize to obtain mixture parameters $\{\widehat\pi_{i,k},\widehat\mu_{i,k},\widehat\Sigma_{i,k}\}$ and radii $\{r_{i,k}\}$;
    \STATE Compute the projection residual $\varepsilon_i$~\eqref{eq:epsilon_quantile};
    \STATE Cache the tuple $\Pi_i$~\eqref{eq:Pi_i_tuple};
\ENDFOR
\STATE Retain only the cached tuples $\{\Pi_i\}_{i=1}^N$;
\STATE \textbf{Online MPC loop} (online adaptation optional):
\FOR{$t=0,1,2,\dots$}
    \STATE Observe current robot state $\mathrm{q}_t$ and obtain obstacle predictions $\{\mathcal O_{t+i|t}\}_{i=1}^N$;
    \FOR{each horizon index $i$ for which an online residual field $S_{t+i|t}$ becomes available}
        \STATE Project $S_{t+i|t}$ onto the cached basis and compute $\xi^{\mathrm{on}}_{i,t}$;
        \STATE Update the scalar radius multiplier $c_{t+1+i|t+1}$~\eqref{eq:acp_update};

    \ENDFOR
    \STATE Sample candidate input sequences and roll out candidate trajectories;
    \STATE Evaluate the envelope $\mathsf{U}^{\text{online}}_{t+i|t}$~\eqref{eq:Ui_online} along rollouts;
    \STATE Filter trajectories violating~\eqref{eq:mpc-constraint-fcp};
    \STATE If no sampled candidate satisfies the hard conformal constraint~\eqref{eq:mpc-constraint-fcp} at $t$, issue a braking-to-hover action;
    \STATE Select the feasible candidate attaining the lowest cost~\eqref{eq:mpc-constraint-fcp};
    \STATE Apply the first control input and advance to the next planning step;
\ENDFOR
\end{algorithmic}
\end{algorithm}

\section{Safety Analysis and Asymptotic Guarantees}
\label{sec:safety}

This section establishes the safety guarantees of the proposed framework.
A key advantage of the functional formulation is that safety holds \emph{at every point of the field}, for \emph{any} planned trajectory satisfying the tightened conformal constraint, independent of how the control space is sampled, because the bound is certified over the entire field rather than along a single path.

We first present the base safety theorem, assuming the functional envelope covers the true score. We then extend it to a robust guarantee for dynamic environments through the online adaptive update, and finally certify the soft-penalty deployment used in dense scenes.

\subsection{Base Safety Guarantee via Functional Coverage}

The following theorem shows that if the spatial envelope asymptotically covers the true score, the closed-loop system is safe.

\begin{theorem}[Asymptotic Safety Guarantee]
\label{thm:asymptotic_safety}
Suppose the FCP-MPC problem~\eqref{eq:mpc-constraint-fcp} is feasible for all $t \ge 0$, and
let $\mathrm{x}_{t}$ be the actual robot position at time $t$.
Suppose that the online envelope $\mathsf{U}^{\text{online}}_{t+1|t}$ satisfies the asymptotic coverage property on $\bar{\mathcal{X}}$ with probability 1:
\[
\lim_{T\to\infty}\frac{1}{T}\sum_{t=0}^{T-1}
\mathbb I\!\left[
S_{t+1|t}(\mathrm{x})\le  \mathsf{U}^{\text{online}}_{t+1|t}(\mathrm{x})\ \ \forall \mathrm{x}\in\bar{\mathcal X}
\right]
=
1-\alpha.
\]
Then the closed-loop system is asymptotically $(1-\alpha)$-safe:
\begin{equation}
\liminf_{T \to \infty}
\frac{1}{T}\sum_{t=0}^{T-1}
\mathbb{I}\!\left[ \mathcal{A}_t \cap \mathcal{O}_t =\varnothing\right]
\ge 1-\alpha,
\qquad \text{w.p. 1.}
\label{eq:asymptotic_safety_guarantee}
\end{equation}
\end{theorem}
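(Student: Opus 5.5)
The plan is a pathwise implication: on the coverage event at time $t$, the constraint enforced at planning time $t-1$ forces the realized step $t$ to be collision-free. The long-run safety frequency then dominates the long-run coverage frequency, which converges to $1-\alpha$ almost surely by hypothesis.

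\textbf{Step 1: the implication on the coverage event.} Fix $t\ge 1$ and let
\[
E_{t-1}\coloneqq\bigl\{S_{t|t-1}(\mathrm{x})\le \mathsf{U}^{\text{online}}_{t|t-1}(\mathrm{x})\ \forall \mathrm{x}\in\bar{\mathcal X}\bigr\}.
\]
By feasibility, the first planned state at time $t-1$ satisfies \eqref{eq:mpc-constraint-fcp} with $i=1$. In the receding-horizon implementation only the first input is applied, so $\mathrm{q}_{t|t-1}=f(\mathrm{q}_{t-1},\mathrm{u}_{t-1})=\mathrm{q}_t$, assuming exact dynamics. Hence $\mathrm{x}_{t|t-1}=\mathrm{x}_t$, and its grid projection $\bar{\mathrm{x}}_t$ satisfies $\mathsf{L}_{t|t-1}(\bar{\mathrm{x}}_t)\ge r_{\text{robot}}+\delta+\delta_d$. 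On $E_{t-1}$, since $\bar{\mathrm{x}}_t\in\bar{\mathcal X}$,
\[
D_t(\bar{\mathrm{x}}_t)=D_{t|t-1}(\bar{\mathrm{x}}_t)-S_{t|t-1}(\bar{\mathrm{x}}_t)\ge \mathsf{L}_{t|t-1}(\bar{\mathrm{x}}_t)\ge r_{\text{robot}}+\delta+\delta_d .
\]

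\textbf{Step 2: passing from grid to continuum.} The map $\mathrm{x}\mapsto D(\mathrm{x},\mathcal O_t)$ is $1$-Lipschitz, and $\|\mathrm{x}_t-\bar{\mathrm{x}}_t\|\le\delta_d$ by \eqref{eq:resolution}. Therefore $D(\mathrm{x}_t,\mathcal O_t)\ge r_{\text{robot}}+\delta>r_{\text{robot}}$. Because the footprint is a ball, $\inf_{\mathrm{x}\in\mathcal A_t}D(\mathrm{x},\mathcal O_t)=D(\mathrm{x}_t,\mathcal O_t)-r_{\text{robot}}\ge\delta>0$, so $\mathcal A_t\cap\mathcal O_t=\varnothing$. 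This yields the pointwise inequality
\[
\mathbb I[\mathcal A_t\cap\mathcal O_t=\varnothing]\ge \mathbb I[E_{t-1}]
\]
for every $t\ge1$ and every sample path.

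\textbf{Step 3: averaging.} Sum the inequality over $t=1,\dots,T-1$ and re-index. The sum $\frac1T\sum_{t=0}^{T-1}\mathbb I[\mathcal A_t\cap\mathcal O_t=\varnothing]$ is at least $\frac1T\sum_{s=0}^{T-2}\mathbb I[E_s]$. This differs from $\frac1T\sum_{s=0}^{T-1}\mathbb I[E_s]$ by at most $1/T$; the dropped $t=0$ term and the boundary index are $O(1/T)$. Take $\liminf_{T\to\infty}$ on the probability-one event where the hypothesized coverage limit holds. The right-hand side tends to $1-\alpha$, which gives \eqref{eq:asymptotic_safety_guarantee}.

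\textbf{Main obstacle.} I expect the main difficulty to be Step 1, specifically identifying the executed position with the planned point $\mathrm{x}_{t|t-1}$ to which the conformal bound is applied. This identification needs exact (deterministic) dynamics, and it needs the MPC to actually apply the feasible first input rather than, say, a braking fallback; this is why feasibility is assumed. A second requirement is that the relevant coverage event concerns the horizon-$1$ envelope evaluated on the grid only. The margin $\delta_d$ in \eqref{eq:mpc-constraint-fcp} is designed to close exactly this grid-to-continuum gap.

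Step 2 is the other point to get right. A pointwise statement at the grid node suffices, because the lower bound is certified at every grid point, so no path-dependent selection issue arises. This is the uniform-in-$\mathrm{x}$ property emphasized at the start of Section~\ref{sec:safety}.
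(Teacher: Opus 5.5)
Your proposal is correct and follows the paper's proof essentially step for step: you certify only the applied first step on the grid coverage event, use feasibility of the $i=1$ constraint at the grid projection, pass to the true position via $1$-Lipschitzness and the $\delta_d$ margin, and average using the hypothesized coverage limit. Two points are handled more carefully in your version than in the paper. First, you treat the one-step index shift explicitly through the $O(1/T)$ boundary terms, whereas the paper silently identifies $\mathbb{I}[\mathcal{A}_t\cap\mathcal{O}_t=\varnothing]$ with $\mathbb{I}[D_{t+1}(\mathrm{x}_{t+1})\ge r_{\mathrm{robot}}+\delta]$ inside the same sum. Second, you spell out why $D(\mathrm{x}_t,\mathcal{O}_t)\ge r_{\text{robot}}+\delta$ implies $\mathcal{A}_t\cap\mathcal{O}_t=\varnothing$.
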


\begin{proof}
Since the receding-horizon controller applies only the first planned input at each step, it suffices to certify the applied step $i=1$. For each $t$, define the coverage event
\[
\mathcal E_{t+1}
\coloneqq
\{
S_{t+1|t}(\mathrm{x})\le \mathsf{U}^{\text{online}}_{t+1|t}(\mathrm{x})\ \ \forall \mathrm{x}\in\bar{\mathcal{X}}
\}.
\]
By~\eqref{eq:lower-bound-coverage}, on $\mathcal E_{t+1}$ we have $D_{t+1}(\mathrm{x})\ge \mathsf{L}_{t+1|t}(\mathrm{x})$ for all $\mathrm{x}\in \bar{\mathcal{X}}$.
Since the MPC problem is assumed to be feasible, the applied plan $( \mathrm{x}^*_{t+i|t}: 1\leq i\leq N )$ satisfies the constraints~\eqref{eq:mpc-constraint-fcp}.
Let $\bar{\mathrm{x}}^*_{t+1|t} \coloneqq \argmin_{\bar{\mathrm{x}} \in \bar{\mathcal{X}}} \| \bar{\mathrm{x}} - \mathrm{x}^*_{t+1|t} \|$ be the grid point closest to $\mathrm{x}^*_{t+1|t}$, so that by the definition of $\delta_d$ in~\eqref{eq:resolution},
\[
\|\bar{\mathrm{x}}^*_{t+1|t} - \mathrm{x}^*_{t+1|t}\| \leq \delta_d.
\]
Feasibility gives 
\[
\mathsf{L}_{t+1|t}(\bar{\mathrm{x}}^*_{t+1|t}) \ge r_{\text{robot}}+\delta+\delta_d. 
\]
Since $D_{t+1}$ is $1$-Lipschitz, on $\mathcal{E}_{t+1}$ (where $D_{t+1}\ge\mathsf{L}_{t+1|t}$ on $\bar{\mathcal{X}}$),
\[
D_{t+1}(\mathrm{x}^*_{t+1|t}) \ge D_{t+1}(\bar{\mathrm{x}}^*_{t+1|t}) - \delta_d  \ge r_{\text{robot}}+\delta.
\]
With $\mathrm{x}_{t+1}=\mathrm{x}^*_{t+1|t}$, this gives
\begin{equation}\label{eq:proof_feas_tight}
D_{t+1}(\mathrm{x}_{t+1}) \ge r_{\text{robot}}+\delta,
\end{equation}
hence 
\[
\mathbb I[D_{t+1}(\mathrm{x}_{t+1})\ge r_{\mathrm{robot}} + \delta] \ge \mathbb I[\mathcal E_{t+1}].
\]
Averaging over $t=0,\dots,T-1$ and taking $\liminf$ over $T$, 
\begin{align*}
\liminf_{T\to\infty}&
\frac{1}{T}\sum_{t=0}^{T-1}
\mathbb I\!\left[ \mathcal{A}_t \cap \mathcal{O}_t = \varnothing \right] \\
&\geq\liminf_{T\to\infty}
\frac{1}{T}\sum_{t=0}^{T-1}
\mathbb I\!\left[D_{t+1}(\mathrm{x}_{t+1})\ge r_{\mathrm{robot}} + \delta\right] \\
&\ge
\lim_{T\to\infty}
\frac{1}{T}\sum_{t=0}^{T-1}\mathbb I[\mathcal E_{t+1}] 
=
1-\alpha
\end{align*}
with probability one, which proves~\eqref{eq:asymptotic_safety_guarantee}.
\end{proof}

\noindent
{\bf Remark on discharging the coverage hypothesis (static vs.\ adaptive).}
Theorem~\ref{thm:asymptotic_safety} takes the almost-sure long-run field-coverage limit as a \emph{hypothesis}, and how that hypothesis is met differs between the two deployments. First, with the \emph{static} offline envelope ($c\equiv 1$, no online update), split conformal prediction guarantees the \emph{per-step marginal} coverage~\eqref{eq:coverage-guarantee} under exchangeability. Promoting this marginal statement to the almost-sure time-averaged frequency required by Theorem~\ref{thm:asymptotic_safety} additionally requires the realized coverage events to average out along the deployment stream, e.g., under the (approximate) stationarity that the time-invariance of the residual field (Section~\ref{sec:res-properties}) makes plausible, which we verify empirically as realized long-run field coverage in Appendix~\ref{app:field_coverage} (Tables~\ref{tab:field_coverage}--\ref{tab:field_coverage_horizon}). Second, the \emph{adaptive} (AFCP) deployment discharges the hypothesis \emph{unconditionally}: Theorem~\ref{thm:acp-coverage} delivers the long-run frequency $1-\alpha$ for an arbitrary, even non-exchangeable, stream. This is the sense in which the online update robustifies the static guarantee, and it motivates the robust analysis below.

\subsection{Robust Guarantee Under Online Adaptation}

Theorem~\ref{thm:asymptotic_safety} establishes the foundational logic mapping functional coverage to point-wise safety. In offline CP, the exact coverage equality $\lim_{T\to\infty}\dots=1-\alpha$ relies on exchangeability of the calibration and test data.
In dynamic environments, however, unforeseen distribution shifts can invalidate this assumption.
To robustify the framework, we drop   exchangeability  and employ the AFCP method of Section~\ref{subsec:online_update}.
The following theorem extends the  guarantee to dynamic, non-exchangeable environments.

\begin{theorem}[Asymptotic Safety with AFCP]
\label{thm:asymptotic_safety_afcp}
Consider a dynamic environment where data exchangeability does not hold, and suppose the feasibility assumption of Theorem~\ref{thm:asymptotic_safety} holds. 
If the functional envelope $\mathsf{U}^{\text{online}}_{t+1|t}$ is updated online via the AFCP rule of Section~\ref{subsec:online_update}, then the closed-loop system robustly maintains the asymptotic $(1-\alpha)$-safety:
\begin{equation}
\liminf_{T \to \infty}
\frac{1}{T}\sum_{t=0}^{T-1}
\mathbb{I}\!\left[ \mathcal{A}_t \cap \mathcal{O}_t = \varnothing \right]
\ge 1-\alpha,
\qquad \text{w.p. 1.}
\end{equation}
\end{theorem}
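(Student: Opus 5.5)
The argument reduces Theorem~\ref{thm:asymptotic_safety_afcp} to Theorem~\ref{thm:asymptotic_safety}. That theorem uses exchangeability nowhere. Its only probabilistic input is the almost-sure long-run field-coverage hypothesis for the applied horizon $i=1$; everything else is a deterministic, pathwise implication. So it suffices to show that the AFCP-updated envelope $\mathsf{U}^{\text{online}}_{t+1|t}$ satisfies
\[
\lim_{T\to\infty}\frac{1}{T}\sum_{t=0}^{T-1}\mathbb I\!\left[S_{t+1|t}(\mathrm{x})\le \mathsf{U}^{\text{online}}_{t+1|t}(\mathrm{x})\ \forall \mathrm{x}\in\bar{\mathcal X}\right]=1-\alpha
\]
with probability one, for an arbitrary stream. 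Feasibility is assumed as in Theorem~\ref{thm:asymptotic_safety}.

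\textbf{Step 1 (coverage).} Invoke Theorem~\ref{thm:acp-coverage} with $i=1$. Its proof rests on \cite[Proposition 4.1]{gibbs2021adaptive}, a deterministic statement valid for every sequence of scores, which is why exchangeability can be dropped. Concretely, I would recall the telescoping argument. Summing the update~\eqref{eq:acp_update} over $t$ gives
\[
\frac{1}{T}\sum_{t=0}^{T-1}\bigl(\mathrm{err}^{(t)}_1-\alpha\bigr)=\frac{c_{T+1|T}-c_{1|0}}{\gamma T}.
\]
It therefore remains to show that $c_{t+1|t}$ stays bounded.

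Boundedness follows from the monotonicity of $\mathsf{U}(\cdot;c)$ in $c$ and from the finiteness of $a_{t+1|t}$. The latter holds because $\bar{\mathcal X}$ is finite and $S_{t+1|t}$ is bounded on the compact $\mathcal X$ by $\mathrm{diam}(\mathcal X)$. Once $c$ exceeds $\sup_t a_{t+1|t}$, no error occurs and $c$ decreases. Once $c<0$, the envelope is frozen at $c=0$, so $c$ can increase by at most $\gamma(1-\alpha)$ per step before hitting any fixed level. Hence $c_{t+1|t}\in[-\gamma,\ \sup_t a_{t+1|t}+\gamma]$.

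This bound is uniform in $t$ only if $\sup_t a_{t+1|t}<\infty$. That requires a positive denominator: $r_{1,k}\sqrt{\psi_1^\top\widehat\Sigma_k\psi_1}>0$ at some component attaining the max. If the denominator degenerates, I would fall back on the bound $S\le \mathrm{diam}(\mathcal X)$ together with $\varepsilon_1$, or else assume nondegeneracy. This is the main obstacle, since the paper only sketches it.

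For $i=1$ the residual matures at $t+1$, so the one-step delay is harmless.

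\textbf{Step 2 (safety).} Reproduce the pathwise chain from Theorem~\ref{thm:asymptotic_safety}. On the coverage event, $D_{t+1}\ge\mathsf{L}_{t+1|t}$ on $\bar{\mathcal X}$. Feasibility of~\eqref{eq:mpc-constraint-fcp} then gives $\mathsf{L}_{t+1|t}(\bar{\mathrm{x}}^*_{t+1|t})\ge r_{\text{robot}}+\delta+\delta_d$. Because $D_{t+1}$ is $1$-Lipschitz, this yields $D_{t+1}(\mathrm{x}_{t+1})\ge r_{\text{robot}}+\delta$, hence $\mathcal A_{t+1}\cap\mathcal O_{t+1}=\varnothing$. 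Averaging the indicator inequality and applying Step 1 gives the $\liminf\ge1-\alpha$ almost surely; the index shift $t\mapsto t+1$ does not affect the limit.
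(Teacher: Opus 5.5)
Your proposal is correct and matches the paper's proof, which likewise reuses the pathwise chain of Theorem~\ref{thm:asymptotic_safety} and discharges the long-run frequency of $\mathcal E_{t+1}$ simply by invoking Theorem~\ref{thm:acp-coverage} at $i=1$; your explicit shift $t\mapsto t+1$ is, if anything, cleaner than the paper's term-by-term claim $\mathbb{I}[\mathcal A_t\cap\mathcal O_t=\varnothing]\ge\mathbb{I}[\mathcal E_{t+1}]$. Your nondegeneracy caveat is a fair criticism of the paper's remark that $a_{t+1|t}$ is finite merely because $\bar{\mathcal X}$ is finite and $S$ is bounded, and the sound fix is your second option, assuming $\max_k r_{1,k}\bigl(\psi_1(\mathrm{x})^\top\widehat\Sigma_k\psi_1(\mathrm{x})\bigr)^{1/2}>0$ on $\bar{\mathcal X}$ (the bound $S\le\mathrm{diam}(\mathcal X)$ with the fixed $\varepsilon_1$ cannot help where the envelope is constant in $c$); note also that the one-sided conclusion needs only the upper bound on $c_{t+1|t}$, which makes the lower-bound half of your boundedness argument unnecessary.
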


\begin{proof}
Let 
\[
\mathcal E_{t+1} \coloneqq \{ S_{t+1|t}(\mathrm{x})\le \mathsf{U}^{\text{online}}_{t+1|t}(\mathrm{x})\ \forall \mathrm{x}\in\bar{\mathcal{X}} \}.
\]
The per-step implication $\mathbb I[\mathcal{A}_t \cap \mathcal{O}_t = \varnothing] \ge \mathbb I[\mathcal E_{t+1}]$ is established exactly as in Theorem~\ref{thm:asymptotic_safety} [through~\eqref{eq:proof_feas_tight}]; the only change is how the long-run frequency of $\mathcal{E}_{t+1}$ is discharged.
Averaging and taking $\liminf$,
\[
\liminf_{T\to\infty} \frac{1}{T}\sum_{t=0}^{T-1} \mathbb I\!\left[\mathcal{A}_t \cap \mathcal{O}_t = \varnothing\right] \ge \liminf_{T\to\infty} \frac{1}{T}\sum_{t=0}^{T-1}\mathbb I[\mathcal E_{t+1}].
\]
Under the AFCP update, Theorem~\ref{thm:acp-coverage} (at $i=1$) supplies $\lim_{T}\frac1T\sum_t\mathbb{I}[\mathcal{E}_{t+1}]=1-\alpha$ for an arbitrary, even non-exchangeable, stream, without the exchangeability the static case requires. Hence the right-hand side equals $1-\alpha$.
\end{proof}

\subsection{Safety of the Soft Deployment}
\label{subsec:soft_safety}

Theorem~\ref{thm:asymptotic_safety} certifies the \emph{hard} feasibility
filter. 
We now show that the \emph{soft} deployment~\eqref{eq:soft-mpc}, the
variant we recommend in dense scenes where the hard feasible set is
frequently empty, admits the same safety conclusion up to a controllable
slack that vanishes as the penalty weight grows. 
Throughout,
$z=(\mathbf{q},\mathbf{u})$ denotes the (dynamically feasible) decision
variable, $z_w$ a minimizer of the soft program~\eqref{eq:soft-mpc} with
weight $w$, $\mathrm{x}^{w}_{t+1|t}$ the first planned position it applies,
and $\bar{\mathrm{x}}^{w}_{t+1|t}\in\bar{\mathcal{X}}$ its grid projection.

\begin{proposition}[Soft constraint-violation bound]
\label{prop:soft_violation}
Suppose the stage cost is bounded below, $\mathfrak{L}\ge\mathfrak{L}_{\min}$
(e.g., $\mathfrak{L}_{\min}=0$ for nonnegative costs), and that at time $t$
the hard problem~\eqref{eq:mpc-constraint-fcp} is feasible with optimal value
$\mathfrak{L}^{\star}_{t}$. If $z_w$ minimizes the soft
program~\eqref{eq:soft-mpc}, then for every horizon index $1\le i\le N$,
\begin{equation}
\bigl[g_{t+i|t}(\bar{\mathrm{x}}^{w}_{t+i|t})\bigr]_+
\;\le\;
\eta_t(w)\;:=\;\sqrt{\frac{\mathfrak{L}^{\star}_{t}-\mathfrak{L}_{\min}}{w}}.
\label{eq:soft_violation_bound}
\end{equation}
\end{proposition}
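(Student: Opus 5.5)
The argument is the standard quadratic-penalty comparison. We compare the soft objective at its minimizer $z_w$ with its value at an optimal solution of the hard problem.

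First, let $z^\star$ be a minimizer of the hard problem~\eqref{eq:mpc-constraint-fcp}. It exists by the assumed feasibility; if only an infimum is attained, take a feasible sequence approaching $\mathfrak{L}^\star_t$ and pass to the limit at the end. Both programs share the same dynamics and initial-state constraints, so $z^\star$ is admissible for the soft program~\eqref{eq:soft-mpc}. Feasibility for the hard problem means $g_{t+i|t}(\bar{\mathrm{x}}^\star_{t+i|t})\le 0$ for all $1\le i\le N$, so every penalty term $[g_{t+i|t}]_+^2$ vanishes at $z^\star$. Hence $\mathfrak{L}_w(z^\star)=\mathfrak{L}(z^\star)=\mathfrak{L}^\star_t$.

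Second, optimality of $z_w$ gives $\mathfrak{L}_w(z_w)\le \mathfrak{L}_w(z^\star)=\mathfrak{L}^\star_t$. Expanding the left side and using $\mathfrak{L}(z_w)\ge\mathfrak{L}_{\min}$ yields
\[
w\sum_{i=1}^N\bigl[g_{t+i|t}(\bar{\mathrm{x}}^{w}_{t+i|t})\bigr]_+^2\;\le\;\mathfrak{L}^\star_t-\mathfrak{L}(z_w)\;\le\;\mathfrak{L}^\star_t-\mathfrak{L}_{\min}.
\]
The penalty terms are nonnegative, so each single summand is bounded by the whole sum. Dividing by $w>0$ and taking square roots then gives~\eqref{eq:soft_violation_bound} for every $i$. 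The right-hand side is nonnegative, since $\mathfrak{L}^\star_t\ge\mathfrak{L}_{\min}$, so the square root is well defined.

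There is no real obstacle here. The only points needing care are the following:
\begin{itemize}
\item Checking that the soft and hard programs share the same admissible set apart from the penalized constraint, including the grid projection $\bar{\mathrm{x}}$ being the same map in both. This is what makes the hard optimum a valid candidate for the soft program.
\item Handling possible non-attainment of the hard optimum by an $\epsilon$-argument. Taking a hard-feasible $z_\epsilon$ with $\mathfrak{L}(z_\epsilon)\le\mathfrak{L}^\star_t+\epsilon$ gives the bound with $\mathfrak{L}^\star_t+\epsilon$ in place of $\mathfrak{L}^\star_t$, and letting $\epsilon\downarrow0$ recovers~\eqref{eq:soft_violation_bound}.
\end{itemize}
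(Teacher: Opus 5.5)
Your proof is correct and matches the paper's argument: you compare $\mathfrak{L}_w(z_w)$ with the soft objective at the hard optimum $z^\star$, where every hinge penalty vanishes, use $\mathfrak{L}(z_w)\ge\mathfrak{L}_{\min}$, and bound each nonnegative summand by the whole sum before taking square roots. Your $\epsilon$-argument for a non-attained hard infimum is a small addition the paper skips. The paper simply takes $z^\star$ to be a minimizer, which exists automatically for its finite pool of sampled rollouts.
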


\begin{proof}
Let $z^{\star}$ minimize~\eqref{eq:mpc-constraint-fcp}. Being hard-feasible,
it satisfies $g_{t+i|t}(\bar{\mathrm{x}}^{\star}_{t+i|t})\le 0$, so every
hinge $[\,\cdot\,]_+$ vanishes and
$\mathfrak{L}_w(z^{\star})=\mathfrak{L}(z^{\star})=\mathfrak{L}^{\star}_{t}$.
By optimality of $z_w$,
\[
\mathfrak{L}(z_w)+w\sum_{i=1}^{N}\bigl[g_{t+i|t}(\bar{\mathrm{x}}^{w}_{t+i|t})\bigr]_+^2
=\mathfrak{L}_w(z_w)\le \mathfrak{L}_w(z^{\star})=\mathfrak{L}^{\star}_{t}.
\]
Since $\mathfrak{L}(z_w)\ge \mathfrak{L}_{\min}$, we have
\[
\sum_{i=1}^{N}[g_{t+i|t}(\bar{\mathrm{x}}^{w}_{t+i|t})]_+^2
\le (\mathfrak{L}^{\star}_{t}-\mathfrak{L}_{\min})/w.
\]
Each summand is
nonnegative, hence bounded by the sum; taking square roots
gives~\eqref{eq:soft_violation_bound}.
\end{proof}

\noindent
{\bf Remark on the infeasible regime.} The feasibility premise of
Proposition~\ref{prop:soft_violation} is shared with
Theorem~\ref{thm:asymptotic_safety} and is less restrictive than it appears.
Dropping it, the coverage-event chain in the proof of
Theorem~\ref{thm:soft_safety} still gives, for the applied step,
\begin{equation*}
D_{t+1}(\mathrm{x}_{t+1}) \ge r_{\text{robot}} + \delta - v_t,
\qquad v_t := \bigl[g_{t+1|t}(\bar{\mathrm{x}}^{w}_{t+1|t})\bigr]_+,
\end{equation*}
on $\mathcal{E}_{t+1}$, where $v_t$ is the realized conformal violation of the soft
plan. When step $t$ is feasible, Proposition~\ref{prop:soft_violation} bounds
$v_t\le\eta_t(w)$ and Theorem~\ref{thm:soft_safety} is recovered. 
When it is infeasible, no plan meets the conformal constraint, and the penalty term instead drives $v_t$ toward the
minimal achievable violation as $w$ grows; that is, the soft deployment realizes the
least-unsafe plan, and its clearance degrades by exactly the irreducible
infeasibility. The guarantee thus certifies safety wherever it is
certifiable, and elsewhere the soft variant degrades gracefully rather than
aborting.

\begin{theorem}[Asymptotic safety of the soft deployment]
\label{thm:soft_safety}
Let $\bar{\mathfrak{L}}\coloneqq\sup_{t\ge0}\mathfrak{L}^{\star}_{t}<\infty$ (finite since $\mathcal{X}$ is compact and the horizon is finite) and $\eta(w)\coloneqq\sqrt{(\bar{\mathfrak{L}}-\mathfrak{L}_{\min})/w}$. Suppose the hypotheses of Proposition~\ref{prop:soft_violation} hold for all $t\ge0$, and that the online envelope satisfies the asymptotic field-coverage property of Theorem~\ref{thm:asymptotic_safety} (discharged under exchangeability as in Theorem~\ref{thm:asymptotic_safety}, or unconditionally by the AFCP update as in Theorem~\ref{thm:asymptotic_safety_afcp}). Then, the following statements hold:
\begin{enumerate}
\item[(i)] \emph{(Deflated clearance)} With probability one,
\begin{equation*}
\liminf_{T\to\infty}\frac1T\sum_{t=0}^{T-1}
\mathbb{I}\!\left[D_{t+1}(\mathrm{x}_{t+1})\ge r_{\text{robot}}+\delta-\eta(w)\right]
\ge 1-\alpha.
\end{equation*}
\item[(ii)] \emph{(Collision-free)} If the feasibility premise of Proposition~\ref{prop:soft_violation} holds at every step and $w\ge(\bar{\mathfrak{L}}-\mathfrak{L}_{\min})/\delta^{2}$ (equivalently $\eta(w)\le\delta$), then with probability one
\begin{equation*}
\liminf_{T\to\infty}\frac1T\sum_{t=0}^{T-1}
\mathbb{I}\!\left[\mathcal{A}_t\cap\mathcal{O}_t=\varnothing\right]\ge 1-\alpha.
\end{equation*}
\end{enumerate}
Moreover, running the soft penalty~\eqref{eq:soft-mpc} with the inflated margin $\delta+\eta(w)$ in~\eqref{eq:soft_margin} restores $D_{t+1}(\mathrm{x}_{t+1})\ge r_{\text{robot}}+\delta$, and the slack vanishes as $w\to\infty$.
\end{theorem}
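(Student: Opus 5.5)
The plan is to reuse the coverage-event chain from the proof of Theorem~\ref{thm:asymptotic_safety}, now with the soft program's applied step in place of the hard-feasible one, and with Proposition~\ref{prop:soft_violation} supplying the missing clearance. Fix $t$ and let $v_t:=[g_{t+1|t}(\bar{\mathrm{x}}^{w}_{t+1|t})]_+$. By the definition~\eqref{eq:soft_margin} of $g_{t+1|t}$,
\[
\mathsf{L}_{t+1|t}(\bar{\mathrm{x}}^{w}_{t+1|t})\ge r_{\text{robot}}+\delta+\delta_d-v_t .
\]
This holds without any feasibility assumption; it simply rewrites the size of the hinge. On the coverage event $\mathcal{E}_{t+1}$ we have $D_{t+1}\ge\mathsf{L}_{t+1|t}$ on $\bar{\mathcal X}$. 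Since $D_{t+1}$ is $1$-Lipschitz and $\|\bar{\mathrm{x}}^{w}_{t+1|t}-\mathrm{x}^{w}_{t+1|t}\|\le\delta_d$ by~\eqref{eq:resolution}, and the robot executes $\mathrm{x}_{t+1}=\mathrm{x}^{w}_{t+1|t}$, we obtain
\[
D_{t+1}(\mathrm{x}_{t+1})\ge r_{\text{robot}}+\delta-v_t \qquad\text{on }\mathcal{E}_{t+1}.
\]
This is exactly the inequality quoted in the preceding remark.

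Next I would make the slack uniform in $t$. Proposition~\ref{prop:soft_violation} at $i=1$ gives $v_t\le\eta_t(w)=\sqrt{(\mathfrak{L}^\star_t-\mathfrak{L}_{\min})/w}$. Since $\mathfrak{L}^\star_t\le\bar{\mathfrak{L}}$ and the square root is monotone, this yields $v_t\le\eta(w)$ for every $t$. Hence
\[
\mathbb{I}\bigl[D_{t+1}(\mathrm{x}_{t+1})\ge r_{\text{robot}}+\delta-\eta(w)\bigr]\ge\mathbb{I}[\mathcal{E}_{t+1}]
\]
pointwise on the sample space. Averaging over $t<T$ and taking $\liminf_{T\to\infty}$, the assumed field-coverage property (the hypothesis of Theorem~\ref{thm:asymptotic_safety}, or Theorem~\ref{thm:acp-coverage} at $i=1$ in the AFCP case) gives long-run frequency $1-\alpha$ almost surely. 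This proves (i).

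For (ii), the condition $w\ge(\bar{\mathfrak{L}}-\mathfrak{L}_{\min})/\delta^2$ is equivalent to $\eta(w)\le\delta$. Then on $\mathcal{E}_{t+1}$ we get $D_{t+1}(\mathrm{x}_{t+1})\ge r_{\text{robot}}$. With the ball footprint this means $\inf_{\mathrm{x}\in\mathcal{A}_{t+1}}D(\mathrm{x},\mathcal{O}_{t+1})=D_{t+1}(\mathrm{x}_{t+1})-r_{\text{robot}}\ge0$. To conclude disjointness I need the boundary case $D=r_{\text{robot}}$ to cause no trouble; I expect this to be the delicate point. My first approach would be to assume the obstacles $\mathcal{O}_t$ are open or taken with a strict margin. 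Failing that, I would note that the inequality is strict whenever $\eta(w)<\delta$, and handle the equality case by the strict choice of $w$. With $\mathbb{I}[\mathcal{A}_{t+1}\cap\mathcal{O}_{t+1}=\varnothing]\ge\mathbb{I}[\mathcal{E}_{t+1}]$ in hand, the same averaging argument gives the claim; the index shift from $t+1$ to $t$ changes only one term and does not affect the $\liminf$.

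For the final statement, replacing $\delta$ by $\delta+\eta(w)$ in~\eqref{eq:soft_margin} and rerunning the chain gives $D_{t+1}(\mathrm{x}_{t+1})\ge r_{\text{robot}}+\delta+\eta(w)-v_t$. There is a subtlety here: $\bar{\mathfrak{L}}$ must be computed for the tightened hard problem, and that problem must remain feasible. I would state this explicitly as the assumption under which $v_t\le\eta(w)$ persists, so that the bound $r_{\text{robot}}+\delta$ is restored. Finally, $\eta(w)\to0$ as $w\to\infty$ because $\bar{\mathfrak{L}}-\mathfrak{L}_{\min}$ is a fixed finite constant.
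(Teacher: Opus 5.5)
Your proposal is correct and follows the paper's proof essentially step for step. Like the paper, you bound the applied step's hinge by $\eta(w)$ via Proposition~\ref{prop:soft_violation} at $i=1$, push it through the Lipschitz/grid chain of Theorem~\ref{thm:asymptotic_safety} on $\mathcal{E}_{t+1}$, and average. The paper treats $D_{t+1}(\mathrm{x}_{t+1})\ge r_{\text{robot}}$ as collision-free and calls the inflated-margin claim immediate, so your extra points are refinements the paper's proof passes over rather than departures: the tangency case, the index shift, and the need to control $\bar{\mathfrak{L}}$ (and keep the tightened hard problem feasible) under margin $\delta+\eta(w)$.
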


\begin{proof}
Work on the coverage event
\[
\mathcal{E}_{t+1}=\{S_{t+1|t}(\mathrm{x})\le\mathsf{U}^{\text{online}}_{t+1|t}(\mathrm{x})\;\forall\mathrm{x}\in\bar{\mathcal{X}}\},
\]
on which $D_{t+1}(\mathrm{x})\ge\mathsf{L}_{t+1|t}(\mathrm{x})$ for all
$\mathrm{x}\in\bar{\mathcal{X}}$ by~\eqref{eq:lower-bound-coverage}.
Proposition~\ref{prop:soft_violation} at $i=1$ and the definition of $g$
in~\eqref{eq:soft_margin} give
\[
\mathsf{L}_{t+1|t}(\bar{\mathrm{x}}^{w}_{t+1|t})\ge
r_{\text{robot}}+\delta+\delta_d-\eta(w).
\]
Repeating the chain of
Theorem~\ref{thm:asymptotic_safety} [cf.~\eqref{eq:proof_feas_tight}] with this
deflated right-hand side yields, with
$\mathrm{x}_{t+1}=\mathrm{x}^{w}_{t+1|t}$,
\[
D_{t+1}(\mathrm{x}_{t+1})\ge
\mathsf{L}_{t+1|t}(\bar{\mathrm{x}}^{w}_{t+1|t})-\delta_d\ge
r_{\text{robot}}+\delta-\eta(w).
\]
Thus
$\mathbb{I}[D_{t+1}(\mathrm{x}_{t+1})\ge r_{\text{robot}}+\delta-\eta(w)]\ge\mathbb{I}[\mathcal{E}_{t+1}]$;
averaging over $t$ and applying the field-coverage limit as in
Theorem~\ref{thm:asymptotic_safety} proves (i). For (ii), $\eta(w)\le\delta$
gives $D_{t+1}(\mathrm{x}_{t+1})\ge r_{\text{robot}}$ on $\mathcal{E}_{t+1}$; the same averaging
concludes. The inflated-margin claim follows by replacing $\delta$ with
$\delta+\eta(w)$ in~\eqref{eq:soft_margin}.
\end{proof}

\noindent
{\bf Remark on empirical obligations and the optimizer premise.}
Theorem~\ref{thm:soft_safety} is deductive and adds \emph{no} new empirical
requirement: its coverage hypothesis is the one already validated for
Theorem~\ref{thm:asymptotic_safety} in Appendix~\ref{app:field_coverage}
(Tables~\ref{tab:field_coverage}--\ref{tab:field_coverage_horizon}), and the
realized closed-loop behavior of the soft deployment is already reported in
Tables~\ref{tab:2d_results}, \ref{tab:pybullet_results},
and~\ref{tab:results_sparse}. The optimizer premise of
Proposition~\ref{prop:soft_violation} is, moreover, met without approximation by
our sampling-based controller: at each step the hard and soft variants score a
common, mode-independent pool of sampled rollouts and apply its argmin under the
respective objective. Since the hard-feasible candidates form a subset of those
scored in soft mode, the comparator $z^{\star}$ lies in the soft minimization
domain and $\mathfrak{L}_w(z_w)\le\mathfrak{L}_w(z^{\star})$ holds exactly. The
sole residual gap is the resolution of the shared sampled pool, identical across
the two deployments.

\section{Experimental Results}\label{sec:res}

\subsection{Planar Navigation with Pedestrians}

We first evaluate FCP-MPC on the ETH--UCY pedestrian benchmark~\cite{pellegrini2009you,lerner2007crowds}, a
standard collection of recorded human-crowd trajectories comprising
five scenes that span a range of crowd
densities. 
In each scene, a robot with a circular footprint of radius
$r_{\mathrm{robot}} = 0.4$\,m navigates toward a fixed
goal while the recorded pedestrians act as the dynamic obstacles
$\mathcal{O}_t$. The robot follows the unicycle dynamics
\begin{equation*}
\begin{bmatrix} x_{t+1} \\ y_{t+1} \\ \theta_{t+1} \end{bmatrix} = \begin{bmatrix}  x_{t} \\ y_{t} \\ \theta_{t} \end{bmatrix} + \Delta t \begin{bmatrix} \cos(\theta_t) & 0 \\ \sin(\theta_t) & 0 \\ 0 & 1 \end{bmatrix} \begin{bmatrix} v_t \\ \omega_t \end{bmatrix},
\end{equation*}
 driven by the sampling-based controller of Section~\ref{sec:cp_mpc}, implemented
with MPPI~\cite{williams2017information} using 1,200 sampled rollouts over a
horizon of $N = 12$ steps at a planning period
$\Delta t = 0.4$\,s. 
All controllers are implemented in Python (NumPy) and run on an Apple~M4 with $16$\,GB of unified memory; reported control times are wall-clock per planning step.

At each planning step the controller consumes multi-step pedestrian
forecasts from Trajectron++~\cite{salzmann2020trajectron}. The residual between
the predicted and realized distance fields is exactly the quantity our functional conformal layer corrects. Following Section~\ref{subsec:online_update}, the envelope is conformalized once offline, per horizon index, from a held-out calibration split ($30\%$ of each scene's residual-field samples) at the target miscoverage level $\alpha = 0.1$, using $K = 7$ GMM components and $p_i = 5$ FPCA modes per step. When enabled, the online AFCP update~\eqref{eq:acp_update} drives the realized miscoverage toward $\alpha = 0.1$. The conformal field is discretized on a $128\times128$ workspace grid, giving a discretization resolution $\delta_d\approx0.18$--$0.27$\,m across scenes~\eqref{eq:resolution}. The resulting offline tuple collection $\{\Pi_i\}_{i=1}^{N}$~\eqref{eq:Pi_collection}
occupies only $\approx5$\,MB, after which the online controller merely consults this cached field.

We deploy the same offline field in two ways, as a hard feasibility filter (\textbf{FCP-MPC~(hard)}) and as a soft penalty (\textbf{FCP-MPC~(soft)}), and compare against three baselines run
under the identical planner: ACP-MPC~\cite{dixit2023adaptive}, CC-MPC~\cite{lekeufack2024decision}, and
ECP-MPC~\cite{shin2025egocentric}. 
To keep the comparison about the conformal \emph{construction} rather than its tuning, every method shares this identical goal-anchored planner and is calibrated to the \emph{same} target miscoverage level $\alpha=0.1$. Each baseline retains its own native conformal parameters (its per-obstacle or per-state radius and adaptive step size) set to that level, so the conservativeness we report for ACP and ECP reflects their score construction rather than an unfavorable hyperparameter choice. 

Each episode terminates when the robot reaches the goal within $0.6$\,m or a step budget of
$100$ steps ($300$ for the larger \texttt{univ} scene) is exhausted (reported as ``timeout''). We report four closed-loop metrics: \textit{(i)} collision rate, \textit{(ii)} infeasible rate,
\textit{(iii)} steps-to-goal, and \textit{(iv)} per-step control time (ms).
A collision is
recorded when the distance from the robot to the nearest pedestrian falls below
$r_{\mathrm{safe}} = r_{\mathrm{robot}} + r_{\mathrm{obs}} \approx 1.11$\,m
(with $r_{\mathrm{obs}} = 1/\sqrt{2}$\,m); 
the soft-penalty methods
carry no infeasible rate by construction (N/A). Collision rate, infeasible rate,
and steps-to-goal are reported as mean\,$\pm$\,standard deviation over $10$ independent
MPPI sampler seeds, each re-randomizing the sampling-based rollouts. 
For each seed we first average over the scene's evaluation windows ($3$ windows per scene), and the seed spread then quantifies the closed-loop
controller's run-to-run variability. Table~\ref{tab:2d_results} reports the
quantitative results and Fig.~\ref{fig:traj_2d} shows representative closed-loop
trajectories.

\subsection{3D Quadrotor Benchmark}\label{sec:quadrotor-nav-benchmark}

\begin{figure}[t!]
    \centering
    \includegraphics[width=0.9\linewidth]{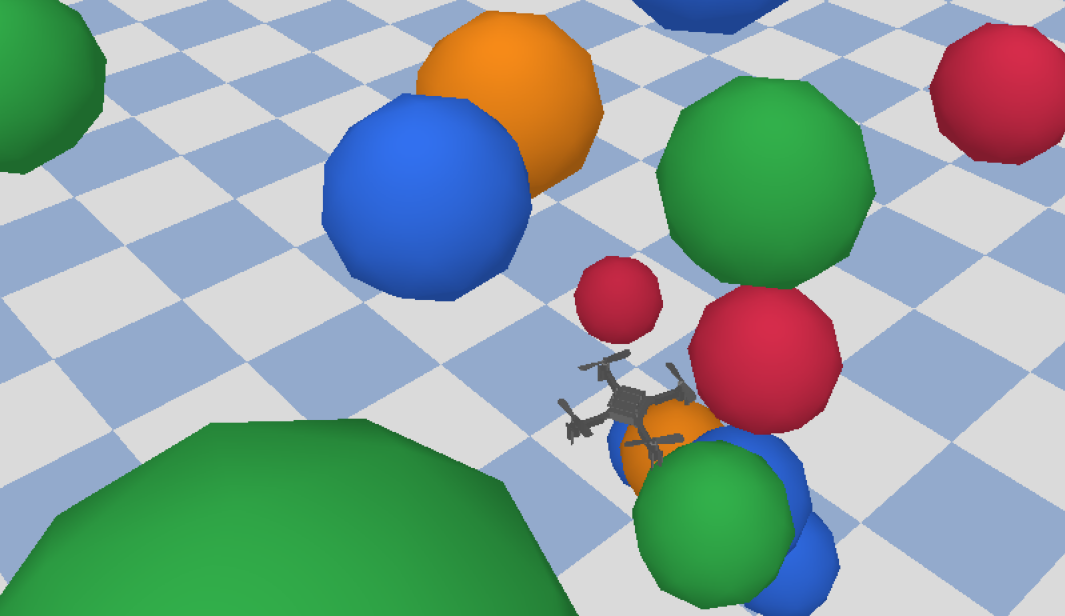}
    \caption{An example snapshot of the quadrotor simulation considered in this paper.}
    \label{fig:quadrotor-example}
\end{figure}

We evaluate FCP-MPC in a physics-based quadrotor navigation task
implemented in PyBullet~\cite{panerati2021learning}. We set $\mathcal{Q} = \mathbb{R}^3$, with $\mathrm{q} \in \mathcal{Q}$ the center of mass (CoM) of the quadrotor, which is velocity-controlled: 
\[
\mathcal{U} = \{ (v_x, v_y, v_z): (v_x^2 + v_y^2)^{1/2} \leq 3,\; |v_z| \leq 0.7  \}.
\]
The quadrotor has radius $r_{\text{robot}}=0.1$ and is equipped with a low-level PID controller running at
240\,Hz, while the high-level MPC planner operates at a slower planning period
$\Delta t = 0.1\,\mathrm{s}$. Since $\Delta t$ is much larger than the PID control period, the quadrotor dynamics  may be  approximated by a first-order integrator,
$
\mathrm{q}_{t+1} = \mathrm{q}_{t} + \Delta t \cdot \mathrm{u}_t.
$

Within a bounded 3D workspace \[
\mathcal{X} = [-3, 7] \times [-3, 7] \times [0, 8],
\]
we spawn $N_{\text{obs}}$ spherical dynamic obstacles of radius $0.2$. 
Each obstacle is a kinematic agent
with stochastic mode switching among several motion patterns, including
constant-velocity, turning, wandering, and stop-and-go behaviors.
Additive process noise 
\[ 
w_t \sim \mathcal{N}(\mathbf{0}, \mathbf{\Sigma}), \quad \mathbf{\Sigma} = \text{diag}(0.22^2, 0.22^2, 0.044^2)
\]
is applied to the obstacle dynamics, inducing increasing
prediction uncertainty over the planning horizon. At each planning step, the controller receives recent obstacle state histories and
predicted obstacle trajectories over a horizon $N = 12$;
a constant-velocity predictor serves as the base prediction model, and prediction
noise is injected to emulate prediction errors.
Oracle future trajectories are used solely for evaluation and metric computation, and the ``oracle future noise'' reported in Section~\ref{sec:quadrotor-nav} perturbs these evaluation-only futures.

The planner samples endpoint candidates and generates smooth quintic polynomial
trajectories in $\mathbb{R}^3$.
Unsafe trajectories are filtered by the conformal lower-bound constraint,
and the minimum-cost feasible one is selected.
Each episode terminates once the robot reaches the goal within
$0.25\,\mathrm{m}$ or a maximum step limit is reached.

Due to the higher-dimensional workspace and the abundance of
free space in 3D, we deploy a \emph{fully offline} functional conformal
envelope here:
unlike the 2D case, excessive conservativeness does not immediately block
progress, so the conformalized worst-case   envelope can be enforced
directly without online adaptation.
The field is discretized on a $40\times40\times40$ grid over the
$10\times10\times8$\,m workspace, giving $\delta_d=0.208$\,m~\eqref{eq:resolution}.
The cached  tuples $\{\Pi_i\}_{i=1}^{N}$ again occupy only a few megabytes,
so online deployment reduces to point-wise queries of this stored field.

We use the same closed-loop metrics as in the 2D study: collision rate,  infeasible
rate,  steps-to-goal, and  per-step control time, with collision now recorded when the minimum
robot--obstacle distance violates $r_{\mathrm{safe}}$. All metrics are averaged over the
evaluated seeds.


\subsection{Properties of the Residual Field}\label{sec:res-properties}

We first examine the two properties of the residual field that justify conformalizing
the envelope offline (Section~\ref{subsec:fcp-why}).
Figures~\ref{fig:fcp_stationarity} and~\ref{fig:fcp_stationarity_scatter} establish
\emph{time-invariance}: per-cell residual statistics estimated from two temporally
disjoint halves of each scene agree closely (cell-mean correlation $r=0.71$--$0.91$
across \texttt{eth}, \texttt{univ}, and \texttt{hotel}), so the field is a stable
property of the scene rather than of a particular time.
Figures~\ref{fig:fcp_lowrank} and~\ref{fig:fcp_scree} establish \emph{low rank}: an FPCA of the residual fields finds that $5$--$7$ components already explain $90\%$ of the
variance on most scenes, with smooth, scene-frame leading eigenfunctions.
Together these confirm that the field is spatially structured and compressible, without reducing to a simple geometric cue such as path curvature or visitation density, which is what lets an offline-conformalized envelope transfer online at low cost.

\begin{figure}[h!]
    \centering
    \includegraphics[width=0.6\linewidth]{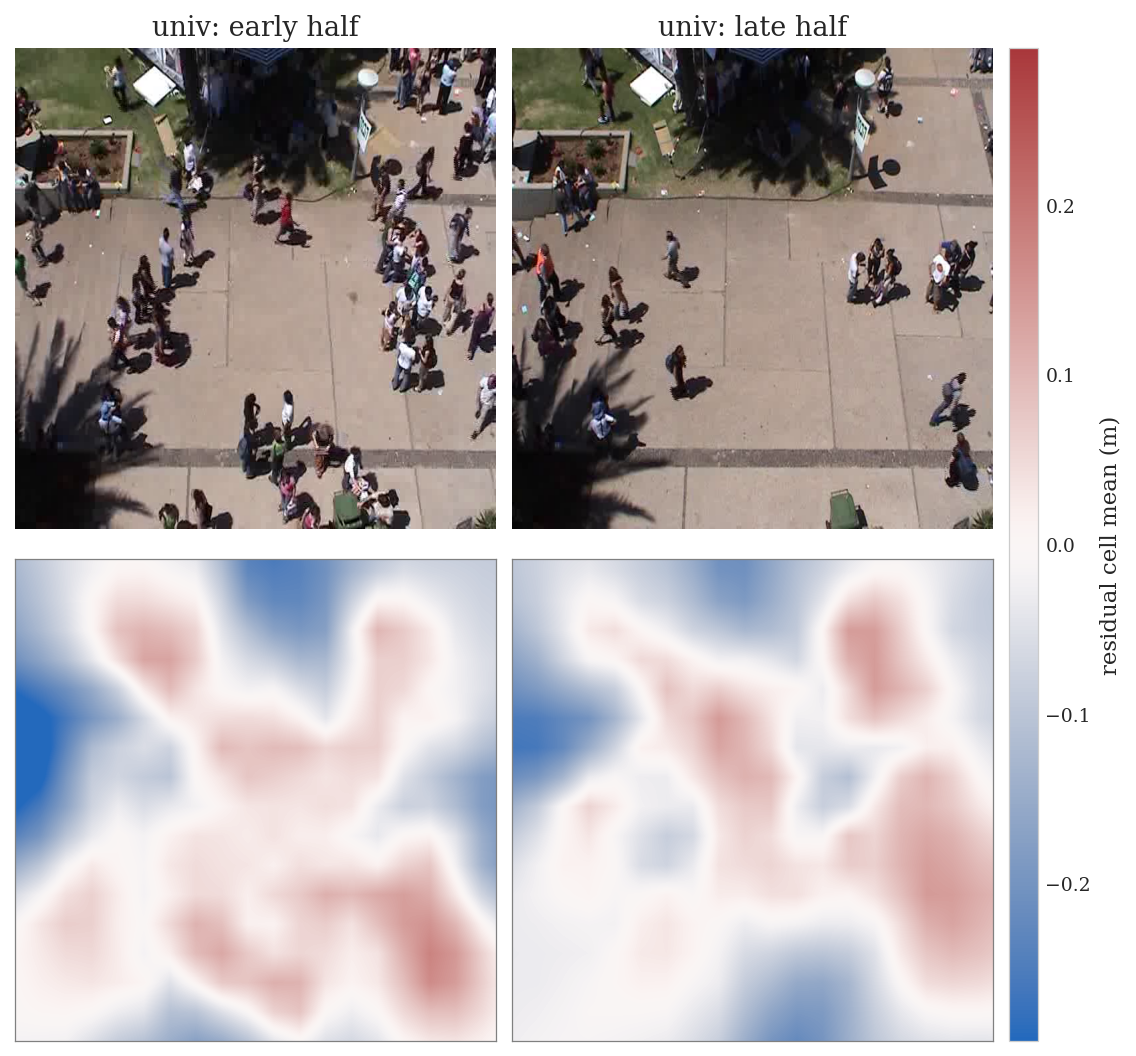}
    \caption{The residual field is approximately \emph{time-invariant}. \textbf{Top:} raw
    camera frames of the \texttt{univ} scene at an early and a late time, cropped to the
    walkable region. \textbf{Bottom:} per-cell means of the residual distance field
    $S_{t+i\mid t}$ estimated from the temporally disjoint early and late halves of the
    recording over the same region (color: mean residual in meters, shared diverging
    scale). These two halves agree closely (Fig.~\ref{fig:fcp_stationarity_scatter}), indicating that the field is a stable property of the scene rather than of a particular time, the premise that lets an offline-conformalized envelope transfer online.}
    \label{fig:fcp_stationarity}
\end{figure}

\begin{figure}[t]
    \centering
    \includegraphics[width=0.5\columnwidth]{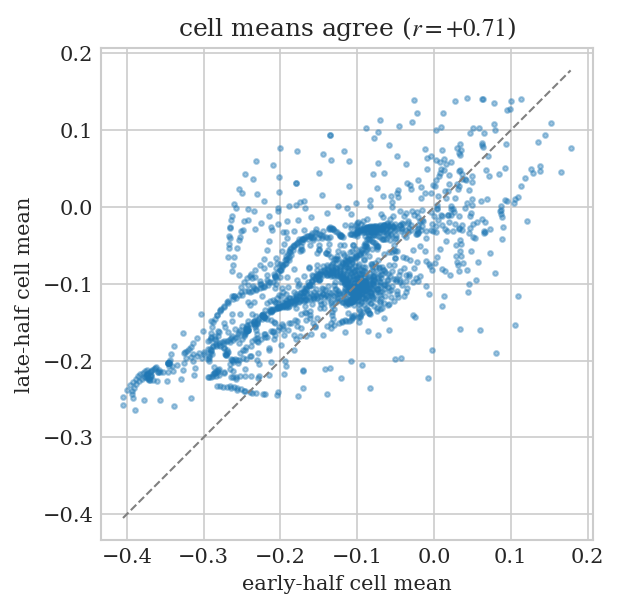}
    \caption{Per-cell residual means of \texttt{univ} from the temporally disjoint early
    and late halves, plotted against each other (one point per spatial cell; the dashed
    line is the identity). They line up closely (Pearson $r=0.71$), quantifying the
    time-invariance visualized in Fig.~\ref{fig:fcp_stationarity}.}
    \label{fig:fcp_stationarity_scatter}
\end{figure}

\begin{figure}[t]
    \centering
    \includegraphics[width=0.99\linewidth]{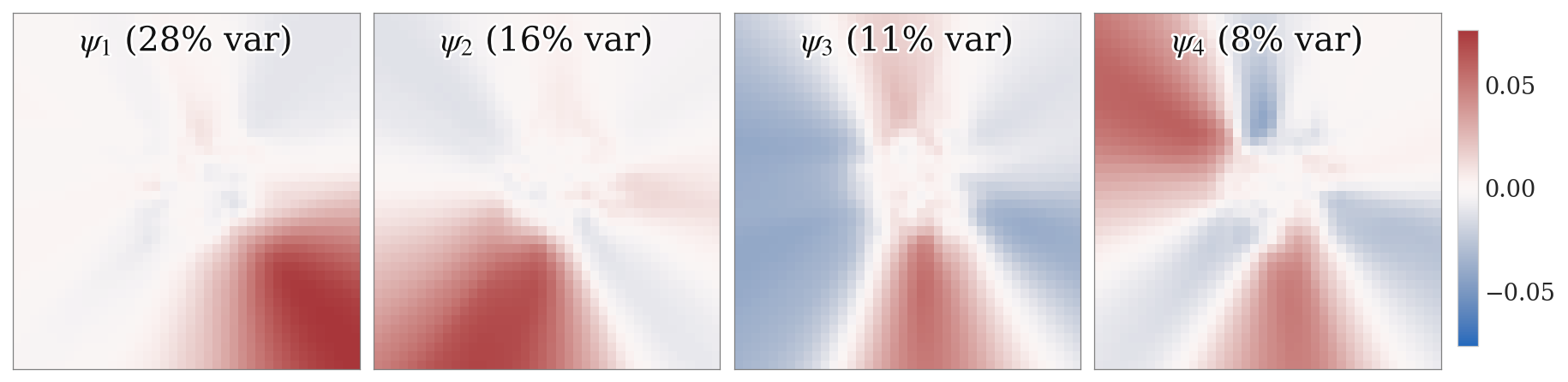}
    \caption{The residual score field is \emph{low-rank}. The four leading functional
    principal components $\psi_1$--$\psi_4$ of the \texttt{univ} residual fields (the
    fraction of variance each explains is shown in parentheses) are smooth, scene-frame
    spatial modes rather than high-frequency noise. A handful of them capture most of the
    field, letting the envelope be learned offline. The cumulative variance
    across all scenes is reported in Fig.~\ref{fig:fcp_scree}.}
    \label{fig:fcp_lowrank}
\end{figure}

\begin{figure}[t]
    \centering
    \includegraphics[width=0.7\columnwidth]{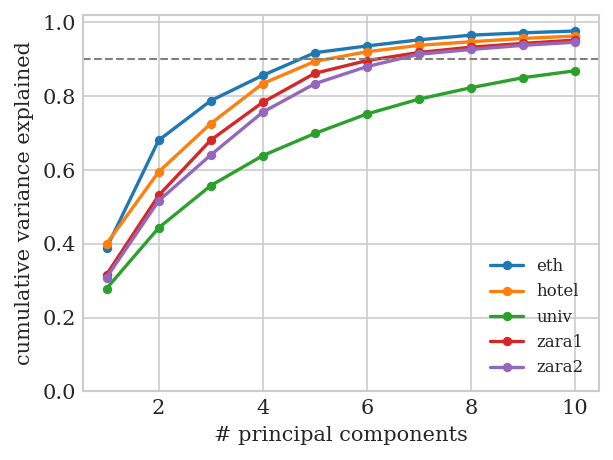}
    \caption{Cumulative variance explained by the leading functional principal components
    on each ETH--UCY scene; $5$--$7$ components already reach $90\%$ (dashed) on most
    scenes (the densest scene, \texttt{univ}, needs more), confirming the residual field
    is low-rank.}
    \label{fig:fcp_scree}
\end{figure}

Figure~\ref{fig:sdf_visualization_2d} visualizes the resulting conformalized signed
distance field on a representative \texttt{zara1} scene. Alongside the ground-truth
field, the conformal lower bound expands the unsafe region around pedestrians whose
predicted motion is more uncertain, widening the safety margin where prediction error
is largest.
Consistent with this, qualitative rollouts show that FCP-MPC adapts its conservativeness to the spatially varying uncertainty, tightening in crowded or uncertain regions while staying permissive in open space.

\begin{figure}[t]
    \centering
    \includegraphics[width=0.95\columnwidth]{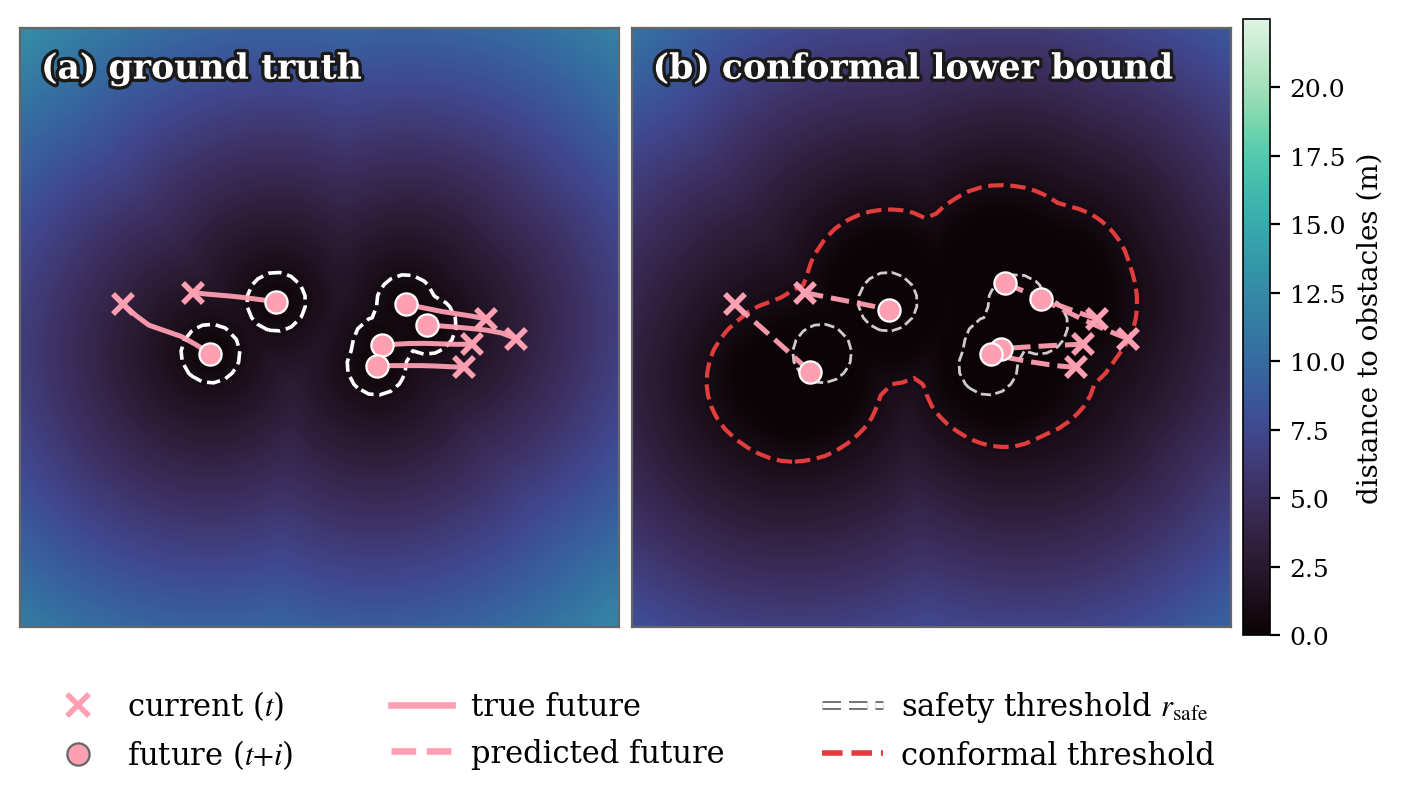}
    \caption{Conformalized distance field on a representative \texttt{zara1} scene,
    looking ahead within the MPC horizon. \textbf{(a)} Ground-truth field, built from the
    pedestrians' \emph{true} future positions; \textbf{(b)} conformal lower-bound field,
    built from their \emph{predicted} future positions and then expanded by the
    offline-calibrated envelope. Color encodes distance to the nearest obstacle (m). Each
    pedestrian is drawn with a cross ($\times$) at its current position and a circle at its
    future position, joined by a solid line for the true future (a) and a dashed line for
    the predicted future (b). White dashed contours mark the nominal safety threshold
    $r_{\text{robot}} + \delta$; the red dashed contour in (b) is the \emph{conformal} threshold,
    which enlarges the unsafe region around pedestrians whose predicted motion is more
    uncertain.}
    \label{fig:sdf_visualization_2d}
\end{figure}

\subsection{2D Pedestrian Navigation}\label{sec:2d-nav}

Table~\ref{tab:2d_results} reports quantitative results on the ETH--UCY pedestrian
datasets, and Fig.~\ref{fig:traj_2d} shows representative closed-loop trajectories
of FCP-MPC and the baselines.
Several consistent trends emerge across all scenes.

The baselines trade off along a single axis. 
CC-MPC has a zero infeasible rate by construction and attains low collision
rates, but is conservative and slow, trading progress for caution. Specifically, it times out in the most crowded scenes
(\texttt{univ}, \texttt{zara2}).
ECP-MPC incurs substantially higher online cost than the other methods
($40$--$83$\,ms versus a few milliseconds for FCP-MPC), though it stays
below the $0.4$\,s planning period in this 2D benchmark.
ACP-MPC is cheap and attains low collisions on some scenes,
but its scalar online adaptation leaves high infeasible rates and frequent timeouts in dense,
dynamic scenes.

FCP-MPC exposes a clear safety--feasibility trade-off between its two deployments of the
same offline field. The \emph{hard} variant enforces the envelope as a feasibility filter and inherits the closed-loop guarantee: collisions stay competitive in the more open scenes
(e.g., \texttt{eth}), but the worst-case envelope compounds over the horizon,
leaving a non-zero infeasible rate and higher collisions in the densest scenes
(\texttt{univ}, \texttt{zara1}). The \emph{soft} variant removes infeasibility on every scene and gives the shortest paths (lowest steps-to-goal on all five scenes), at the cost of a modest collision rate. Both keep the control time far below ECP-MPC (a few milliseconds vs.\ tens), because the uncertainty is quantified offline and the controller only evaluates the cached conformal field online.

\noindent\textbf{Where the hard variant's collisions come from.}
The closed-loop guarantee of Theorem~\ref{thm:asymptotic_safety} applies only on steps where the hard-constrained MPC~\eqref{eq:mpc-constraint-fcp} is feasible, whereas the collision rates in Table~\ref{tab:2d_results} are computed over the \emph{full} episode, including fallback (brake-to-hover) steps taken after infeasibility that lie outside the theorem's premise. Two distinct mechanisms inflate the hard variant's full-episode collisions in the dense scenes, and we separate them rather than attribute everything to fallback. Restricting the collision rate to the steps where the certified filter is actually feasible, it falls within the target $\alpha=0.1$ on \emph{every} scene ($0.000$, $0.006$, $0.026$, $0.027$, $0.044$ on \texttt{eth}, \texttt{hotel}, \texttt{univ}, \texttt{zara1}, \texttt{zara2}; Table~\ref{tab:2d_feasible}), well below the full-episode rates. The excess is carried by the post-infeasibility fallback steps, whose collision rate is $0.33$--$0.37$ on the dense scenes, rather than by failures of the certified filter where it is  active. 

The soft variant tells a complementary story. It collides at a broadly uniform $0.17$--$0.19$ across scenes (\texttt{hotel} lower, $0.045$) rather than concentrating in the harder ones. 
Since the field coverage meets the target throughout (Appendix~\ref{app:field_coverage}), this reflects a progress--safety trade-off of the penalty, not envelope undercoverage (Section~\ref{sec:discussion}).

\begin{table}[!t]
\centering
\small
\caption{2D navigation results on ETH--UCY pedestrian datasets.
Collision rate, infeasible rate, and steps-to-goal are mean\,$\pm$\,std over $10$ MPPI
sampler seeds; lower is better.
The FCP-MPC~(hard) and FCP-MPC~(soft) rows use our core method with the
\emph{offline-conformalized} envelope, so all methods are compared like-for-like;
the optional online adaptation (AFCP)~\eqref{eq:acp_update} is isolated in the
ablation of Table~\ref{tab:2d_ablation}.}
\label{tab:2d_results}
\begin{tabular}{llcccc}
\hline
Dataset & Method & Collision rate $\downarrow$ & Infeasible rate $\downarrow$ & Steps to goal $\downarrow$ & Ctrl.\ time (ms) $\downarrow$ \\
\hline
eth & ACP-MPC~\cite{dixit2023adaptive} & $0.079\pm0.055$ & $0.284\pm0.124$ & $79.0\pm18.7$ & $1.32\pm0.44$ \\
 & CC-MPC~\cite{lekeufack2024decision} & $\mathbf{0.036}\pm0.032$ & N/A & $96.7\pm2.1$ & $1.03\pm0.14$ \\
 & ECP-MPC~\cite{shin2025egocentric} & $0.047\pm0.041$ & $\mathbf{0.239}\pm0.185$ & timeout & $40.88\pm6.20$ \\
 & FCP-MPC (hard) & $0.057\pm0.024$ & $0.700\pm0.111$ & $83.7\pm16.0$ & $\mathbf{0.79}\pm0.04$ \\
 & FCP-MPC (soft) & $0.175\pm0.069$ & N/A & $\mathbf{41.0}\pm4.4$ & $1.80\pm0.20$ \\
\hline
hotel & ACP-MPC~\cite{dixit2023adaptive} & $0.041\pm0.048$ & $0.428\pm0.277$ & $80.3\pm24.4$ & $1.30\pm0.36$ \\
 & CC-MPC~\cite{lekeufack2024decision} & $0.049\pm0.047$ & N/A & $99.3\pm0.6$ & $1.45\pm0.44$ \\
 & ECP-MPC~\cite{shin2025egocentric} & $\mathbf{0.020}\pm0.018$ & $\mathbf{0.213}\pm0.188$ & $85.7\pm24.8$ & $52.70\pm3.79$ \\
 & FCP-MPC (hard) & $0.068\pm0.098$ & $0.357\pm0.243$ & $79.7\pm25.5$ & $\mathbf{1.30}\pm0.24$ \\
 & FCP-MPC (soft) & $0.045\pm0.078$ & N/A & $\mathbf{35.7}\pm14.2$ & $2.03\pm0.15$ \\
\hline
univ & ACP-MPC~\cite{dixit2023adaptive} & $\mathbf{0.001}\pm0.002$ & $\mathbf{0.566}\pm0.151$ & timeout & $4.56\pm0.63$ \\
 & CC-MPC~\cite{lekeufack2024decision} & $0.068\pm0.018$ & N/A & timeout & $5.50\pm0.80$ \\
 & ECP-MPC~\cite{shin2025egocentric} & $0.292\pm0.180$ & $0.664\pm0.370$ & timeout & $82.89\pm3.54$ \\
 & FCP-MPC (hard) & $0.204\pm0.040$ & $0.579\pm0.122$ & $283.3\pm28.9$ & $\mathbf{2.52}\pm0.38$ \\
 & FCP-MPC (soft) & $0.174\pm0.094$ & N/A & $\mathbf{92.3}\pm31.9$ & $6.93\pm0.36$ \\
\hline
zara1 & ACP-MPC~\cite{dixit2023adaptive} & $0.188\pm0.071$ & $0.573\pm0.201$ & timeout & $0.90\pm0.22$ \\
 & CC-MPC~\cite{lekeufack2024decision} & $\mathbf{0.041}\pm0.041$ & N/A & $90.0\pm9.2$ & $1.03\pm0.25$ \\
 & ECP-MPC~\cite{shin2025egocentric} & $0.094\pm0.089$ & $\mathbf{0.161}\pm0.179$ & $82.0\pm31.2$ & $49.30\pm4.69$ \\
 & FCP-MPC (hard) & $0.244\pm0.041$ & $0.679\pm0.171$ & $81.0\pm32.9$ & $\mathbf{0.78}\pm0.15$ \\
 & FCP-MPC (soft) & $0.191\pm0.076$ & N/A & $\mathbf{38.0}\pm2.0$ & $1.80\pm0.18$ \\
\hline
zara2 & ACP-MPC~\cite{dixit2023adaptive} & $0.262\pm0.120$ & $0.628\pm0.389$ & $94.0\pm10.4$ & $1.15\pm0.24$ \\
 & CC-MPC~\cite{lekeufack2024decision} & $\mathbf{0.056}\pm0.016$ & N/A & timeout & $1.27\pm0.34$ \\
 & ECP-MPC~\cite{shin2025egocentric} & $0.176\pm0.085$ & $\mathbf{0.318}\pm0.193$ & timeout & $52.62\pm1.46$ \\
 & FCP-MPC (hard) & $0.181\pm0.078$ & $0.410\pm0.147$ & $92.0\pm13.9$ & $\mathbf{1.11}\pm0.22$ \\
 & FCP-MPC (soft) & $0.189\pm0.150$ & N/A & $\mathbf{65.0}\pm9.5$ & $2.14\pm0.09$ \\
\hline
\end{tabular}
\end{table}

\begin{table}[t]
\centering
\footnotesize
\setlength{\tabcolsep}{4pt}
\caption{Ablation of the online envelope adaptation~\eqref{eq:acp_update} in 2D.
Within each constraint mode (Hard / Soft), the better value per metric is in bold.
Collision, infeasible, and steps-to-goal are mean\,$\pm$\,std over $10$ MPPI sampler seeds;
lower values indicate better performance.}
\label{tab:2d_ablation}
\begin{tabular}{lllcccc}
\hline
Dataset & Constraint & Online adapt. & Collision rate $\downarrow$ & Infeasible rate $\downarrow$ & Steps to goal $\downarrow$ & Ctrl.\ time (ms) $\downarrow$ \\
\hline
eth & Hard & No & $\mathbf{0.057}\pm0.024$ & $\mathbf{0.700}\pm0.111$ & $\mathbf{83.7}\pm16.0$ & $\mathbf{0.79}\pm0.04$ \\
 & Hard & Yes & $0.064\pm0.030$ & $0.766\pm0.181$ & $\mathbf{83.7}\pm16.0$ & $2.23\pm0.15$ \\
 & Soft & No & $0.175\pm0.069$ & N/A & $\mathbf{41.0}\pm4.4$ & $\mathbf{1.80}\pm0.20$ \\
 & Soft & Yes & $\mathbf{0.168}\pm0.063$ & N/A & $43.3\pm4.0$ & $4.57\pm0.27$ \\
\hline
hotel & Hard & No & $0.068\pm0.098$ & $\mathbf{0.357}\pm0.243$ & $\mathbf{79.7}\pm25.5$ & $\mathbf{1.30}\pm0.24$ \\
 & Hard & Yes & $\mathbf{0.040}\pm0.069$ & $0.388\pm0.328$ & $84.3\pm27.1$ & $3.26\pm0.56$ \\
 & Soft & No & $\mathbf{0.045}\pm0.078$ & N/A & $35.7\pm14.2$ & $\mathbf{2.03}\pm0.15$ \\
 & Soft & Yes & $0.081\pm0.140$ & N/A & $\mathbf{30.7}\pm5.5$ & $4.54\pm0.06$ \\
\hline
univ & Hard & No & $0.204\pm0.040$ & $\mathbf{0.579}\pm0.122$ & $\mathbf{283.3}\pm28.9$ & $\mathbf{2.52}\pm0.38$ \\
 & Hard & Yes & $\mathbf{0.178}\pm0.080$ & $0.672\pm0.078$ & timeout & $5.24\pm0.37$ \\
 & Soft & No & $\mathbf{0.174}\pm0.094$ & N/A & $92.3\pm31.9$ & $\mathbf{6.93}\pm0.36$ \\
 & Soft & Yes & $0.207\pm0.092$ & N/A & $\mathbf{85.0}\pm23.4$ & $11.58\pm0.60$ \\
\hline
zara1 & Hard & No & $0.244\pm0.041$ & $\mathbf{0.679}\pm0.171$ & $\mathbf{81.0}\pm32.9$ & $\mathbf{0.78}\pm0.15$ \\
 & Hard & Yes & $\mathbf{0.244}\pm0.068$ & $0.707\pm0.190$ & $\mathbf{81.0}\pm32.9$ & $2.35\pm0.60$ \\
 & Soft & No & $0.191\pm0.076$ & N/A & $\mathbf{38.0}\pm2.0$ & $\mathbf{1.80}\pm0.18$ \\
 & Soft & Yes & $\mathbf{0.190}\pm0.076$ & N/A & $38.3\pm2.5$ & $4.16\pm0.35$ \\
\hline
zara2 & Hard & No & $\mathbf{0.181}\pm0.078$ & $\mathbf{0.410}\pm0.147$ & $\mathbf{92.0}\pm13.9$ & $\mathbf{1.11}\pm0.22$ \\
 & Hard & Yes & $0.200\pm0.066$ & $0.513\pm0.061$ & timeout & $2.72\pm0.24$ \\
 & Soft & No & $\mathbf{0.189}\pm0.150$ & N/A & $\mathbf{65.0}\pm9.5$ & $\mathbf{2.14}\pm0.09$ \\
 & Soft & Yes & $0.208\pm0.134$ & N/A & $66.7\pm11.5$ & $5.07\pm0.20$ \\
\hline
\end{tabular}
\end{table}

\begin{table}[t]
\centering
\small
\caption{Full-episode vs.\ \emph{feasible-step} collision rate for FCP-MPC~(hard) on the
ETH--UCY scenes, measured with per-step feasibility logging. On steps where the certified filter
returns a plan (i.e.\ not a brake-to-hover fallback), the collision rate is within the target $\alpha=0.1$ on every scene; the full-episode
rate is inflated by post-infeasibility fallback steps. Same-run paired values; the full-episode
column reproduces the trend of Table~\ref{tab:2d_results} up to sampler-seed variance.}
\label{tab:2d_feasible}
\begin{tabular}{lcc}
\hline
Scene & Coll. (full-episode) $\downarrow$ & Coll. (feasible-step) $\downarrow$ \\
\hline
\texttt{eth} & 0.057 & \textbf{0.000} \\
\texttt{hotel} & 0.068 & \textbf{0.006} \\
\texttt{univ} & 0.204 & \textbf{0.026} \\
\texttt{zara1} & 0.244 & \textbf{0.027} \\
\texttt{zara2} & 0.181 & \textbf{0.044} \\
\hline
\end{tabular}

\end{table}

\begin{figure}[t]
  \centering
  \includegraphics[width=\linewidth]{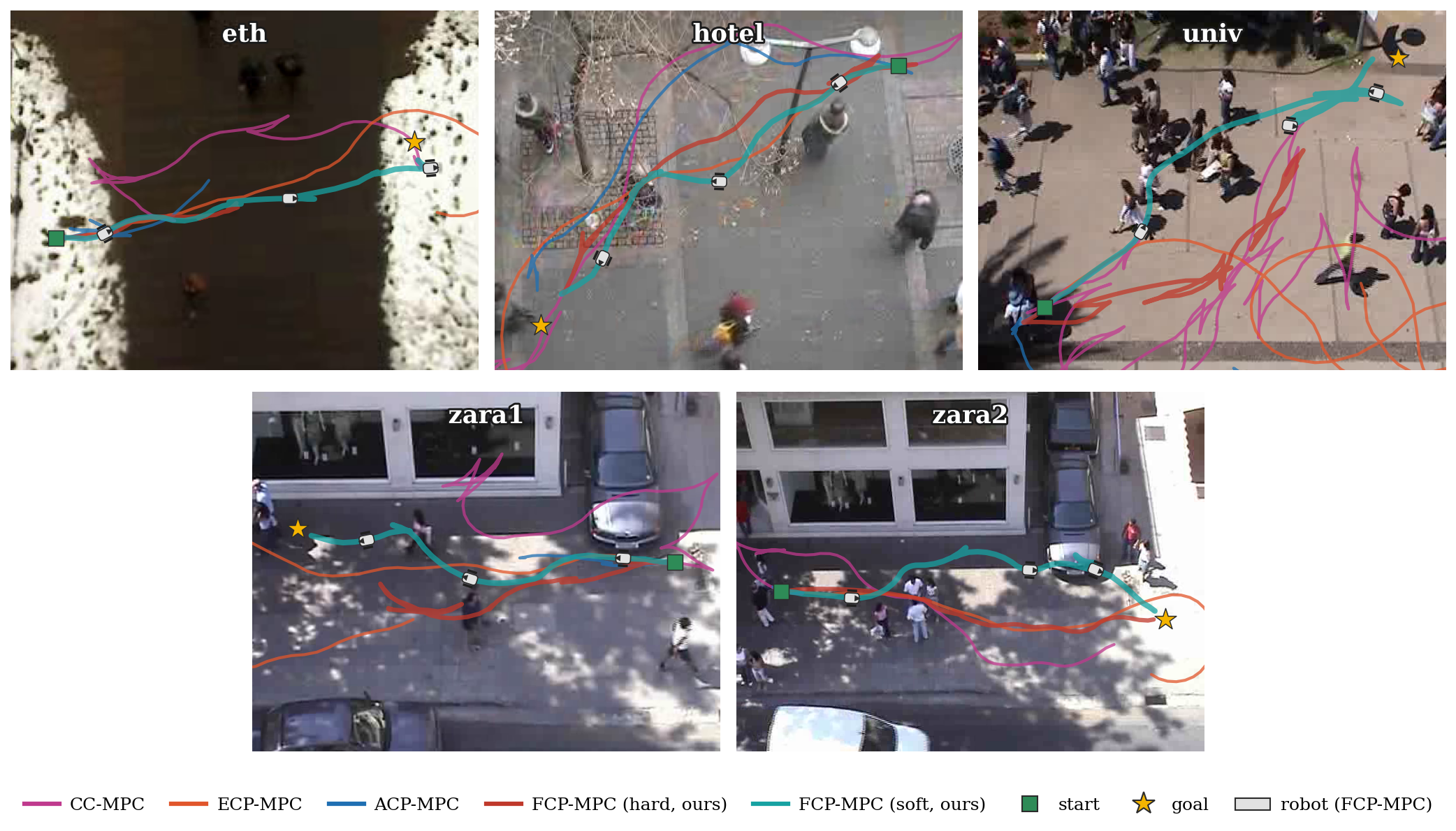}
  \caption{Representative closed-loop trajectories on one scene of each of the five
  ETH--UCY datasets, overlaid on the raw camera frame of each scene (green squares:
  start; gold stars: goal). Both variants of FCP-MPC (ours), \emph{soft} (thick teal) and \emph{hard} (thick red), reach the
  goal along direct paths, whereas the baselines, drawn in lighter lines, tend to wander
  or stall: CC-MPC (magenta) is slow and conservative, ECP-MPC (orange) detours, and
  ACP-MPC (blue) often fails to make steady progress in dense scenes such as
  \texttt{univ}.}
  \label{fig:traj_2d}
\end{figure}

\noindent\textbf{Effect of online envelope adaptation.}
The offline functional envelope is conformalized once on held-out data. Online
adaptation is an \emph{optional} refinement that, in the spirit of adaptive conformal
prediction, rescales the cached radii through a single scalar $c_{t+i|t}$ per horizon
index via~\eqref{eq:acp_update}, driving the realized miscoverage toward
$\alpha$ without re-fitting the GMM.
Table~\ref{tab:2d_ablation} isolates its effect, and the changes are small: the hard-constraint collision and infeasible rates shift only marginally, mostly a slight increase that stays within one standard deviation,
and the soft variant, already feasible by
construction, shifts only slightly. 
This in-distribution neutrality is expected, since the
offline-conformalized envelope already satisfies the required coverage, so the online
update neither helps nor hurts materially when deployment matches calibration.
Because it merely recombines the precomputed grids of Section~\ref{sec:score-comp}, it adds only a few milliseconds per step and stays
below the cost of ECP-MPC.
This sets up the deployment-time density shift of Section~\ref{subsec:density-shift}, where calibration and deployment intentionally differ and the update becomes consequential.

\subsection{3D Quadrotor Navigation}\label{sec:quadrotor-nav}

Recall the 3D setup of Section~\ref{sec:quadrotor-nav-benchmark}: a PyBullet quadrotor threading spherical dynamic obstacles under a constant-velocity predictor, the planner replanning every $\Delta t=0.1$\,s.
Table~\ref{tab:pybullet_results} reports closed-loop results at $N_{\text{obs}}=280$ dynamic
obstacles with noisy predictions (prediction noise $0.2$, obstacle process noise
$0.22$, oracle future noise $0.2$), averaged over $17$ seeds. 
Because absolute timings depend on hardware and implementation, we report per-step control time to compare
how each method's cost \emph{scales} with $N_{\mathrm{obs}}$ on a common platform,
not to assert an absolute real-time threshold.

Since all controllers share the same goal-anchored planner, the differences in
Table~\ref{tab:pybullet_results} stem from the conformal method alone. 
They fall along two
axes: \textit{(i)} how conservative each safety construction is, i.e., whether the robot reaches the
goal at all, and \textit{(ii)} how expensive it is per step.

The two baselines that enforce uncertainty as a \emph{hard} exclusion never reach the goal.
ACP-MPC~\cite{dixit2023adaptive} inflates a scalar conformal radius around each obstacle and
ECP-MPC~\cite{shin2025egocentric} conformalizes an egocentric score at each robot state; both
fly the full horizon, but their inflation is too conservative to thread $280$ moving
obstacles, so they time out on every seed. 
FCP-MPC~(hard) enforces the \emph{same} style of hard feasibility filter
and inherits the guarantee of
Theorems~\ref{thm:asymptotic_safety}--\ref{thm:asymptotic_safety_afcp}, but over a conformal
\emph{field} rather than per-obstacle inflation. 
This is markedly less conservative, yet at
this extreme density even it eventually exhausts the feasible set and
times out. 
No hard filter clears $280$ obstacles, but our field bound is the least conservative of the three and runs at the lowest per-step cost of any method, whereas ECP-MPC is by far the most expensive.

Relaxing the bound from a constraint to a penalty restores progress. 
CC-MPC, a purely soft
formulation with no spatial uncertainty model, reaches the goal on 41\% of seeds but is slow and less safe. FCP-MPC~(soft) applies the
same conformal field as a penalty and is the strongest method overall: it reaches the goal on every seed in the fewest steps, removes infeasibility by construction, and attains the lowest collision rate of any goal-reaching method, at $74.8$\,ms per step on our platform (which is $32\times$ faster than ECP-MPC). Unlike the baselines, its per-step cost
does not grow with $N_{\mathrm{obs}}$.

Reducing the workspace to $N_{\mathrm{obs}}=50$ (Table~\ref{tab:results_sparse}) confirms that
conservativeness, not the planner, drives these outcomes. With ample free space, FCP-MPC~(hard)
now threads the field on $88\%$ of seeds and FCP-MPC~(soft) again reaches
every seed at the lowest collision rate, and CC-MPC also reaches the goal. 
ACP- and
ECP-MPC, however, still time out on every seed, since their per-obstacle and egocentric inflation
stays too tight even when space is plentiful. We return to this density trend in
Section~\ref{sec:discussion}.
Fig.~\ref{fig:fcp_3d} illustrates the conformal bound in 3D, and
Fig.~\ref{fig:traj_3d} compares closed-loop trajectories across three seeds:
FCP-MPC~(soft) reaches the goal along the shortest trajectories,
whereas ACP-MPC and ECP-MPC time out short of it, and
CC-MPC reaches the goal only along substantially longer paths and at a
per-step cost that rises with $N_{\mathrm{obs}}$ (around $100$\,ms at
$N_{\mathrm{obs}}=280$ on our platform).

\begin{figure}[t]
  \centering
  \includegraphics[width=0.6\columnwidth]{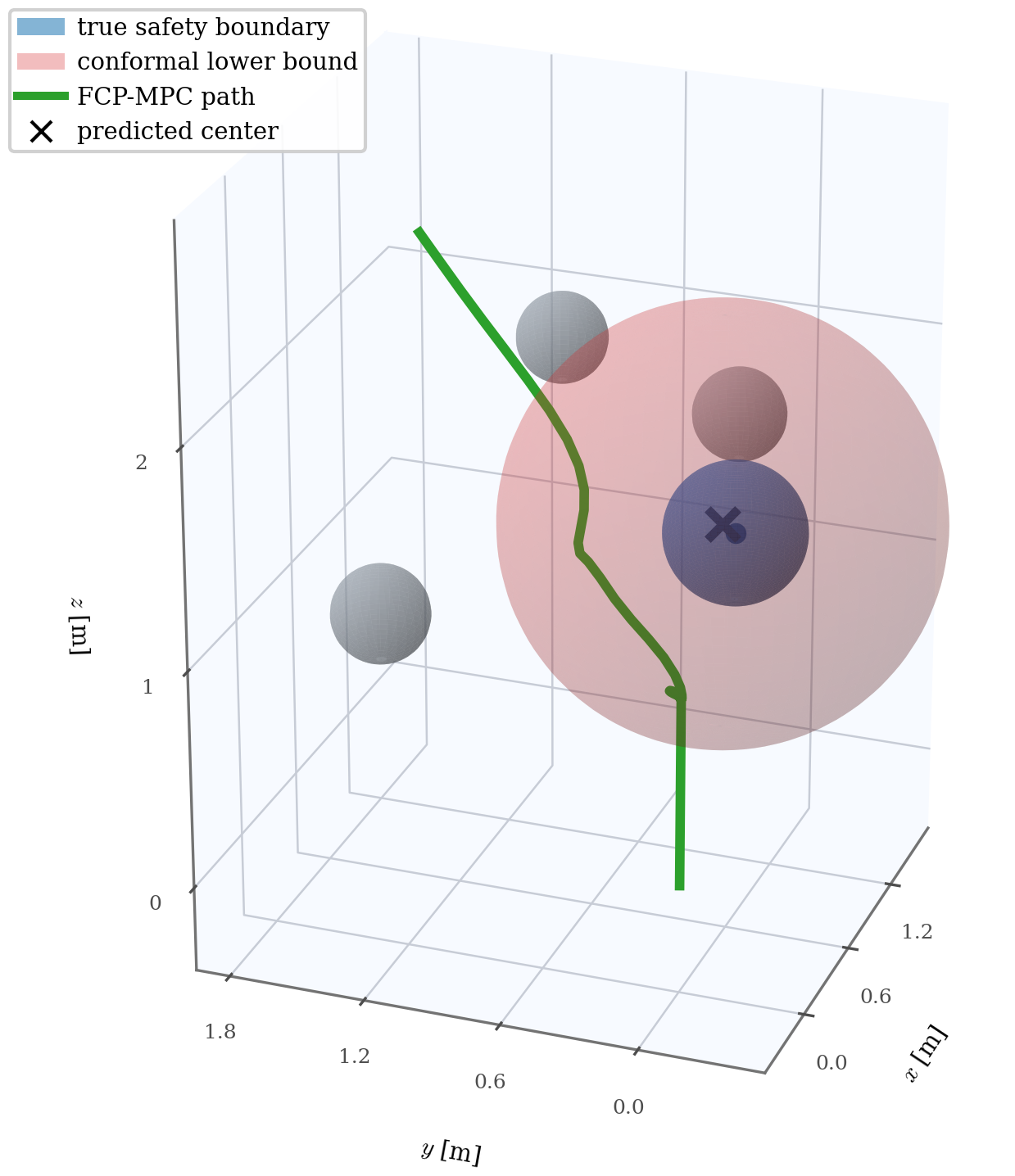}
  \caption{True and conformal safety bounds in 3D, zoomed onto a single obstacle.
  The solid blue ball is the \emph{true} safety boundary
  $D_{t+i}(\mathrm{x})=r_{\text{robot}} + \delta$ around the actual obstacle (used only for
  evaluation); the translucent red shell is the \emph{conformal lower-bound} boundary
  $\mathsf{L}_{t+i\mid t}(\mathrm{x})=r_{\text{robot}} + \delta$ from the functional
  conformal prediction, i.e., the nominal safety region around the predicted center
  ($\times$) inflated by the conformal margin. The conformalized boundary encloses the true boundary here, illustrating the intended field-coverage behavior. The executed FCP-MPC path (green) threads just outside
  it, progressing efficiently while respecting the worst-case safety constraint. Faint
  gray spheres are other obstacles in the scene.}
  \label{fig:fcp_3d}
\end{figure}

\begin{figure}[t]
  \centering
  \includegraphics[width=\linewidth]{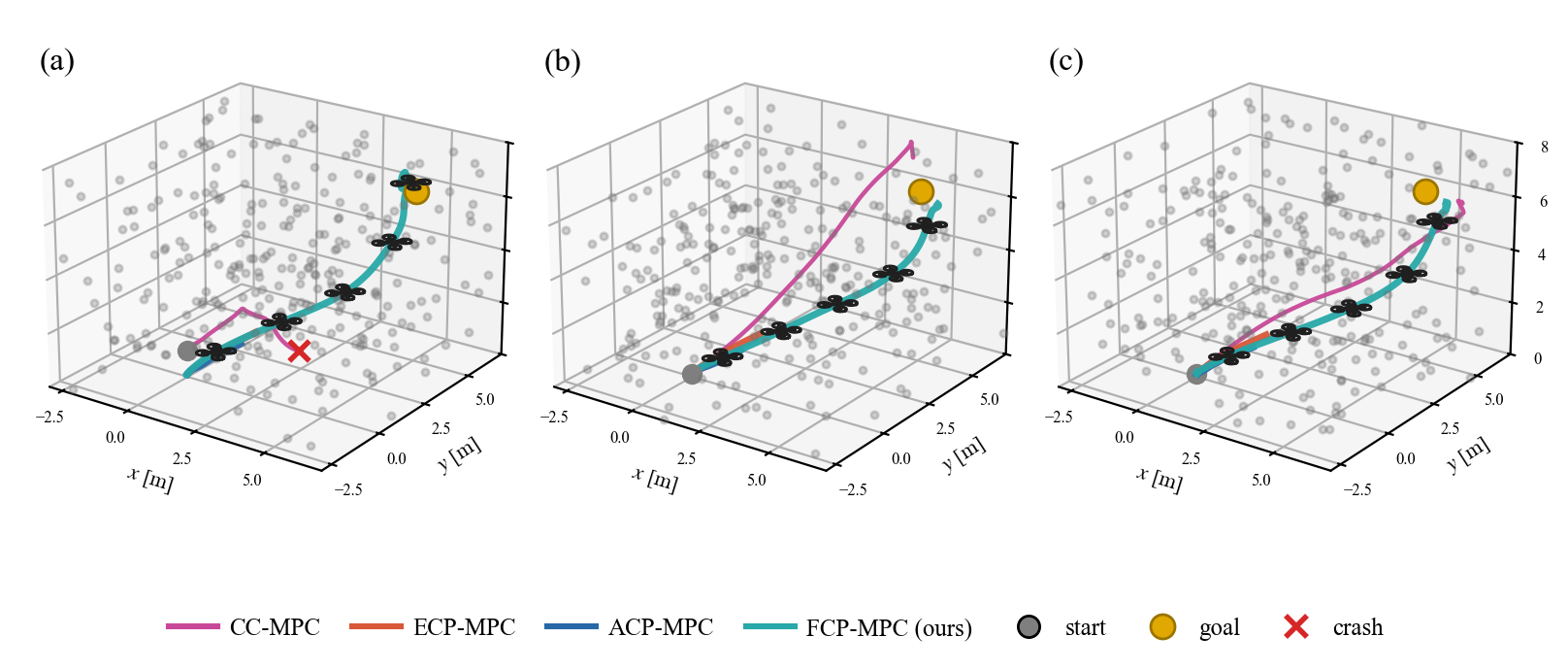}
  \caption{Closed-loop trajectory comparison in the dense 3D quadrotor task
  ($N_{\text{obs}}{=}280$) across three random seeds, shown in
  panels~(a)--(c). All four controllers run under the \emph{same} goal-anchored
  planner: CC-MPC (magenta), ECP-MPC (orange), ACP-MPC (blue), and FCP-MPC (ours, teal).
  The start is a green marker, the goal a gold marker, and a red ``$\times$'' marks where
  a controller crashes. Faint gray dots are the dynamic obstacles. FCP-MPC~(soft) reaches the goal along the shortest paths of any method, whereas ACP-MPC and ECP-MPC are too conservative to thread the dense field and time out short of the goal. CC-MPC also reaches the goal but along substantially longer trajectories and at a per-step cost that grows with $N_{\mathrm{obs}}$ (Table~\ref{tab:pybullet_results}).}
  \label{fig:traj_3d}
\end{figure}

\begin{figure}[t]
  \centering
  \includegraphics[width=0.6\linewidth]{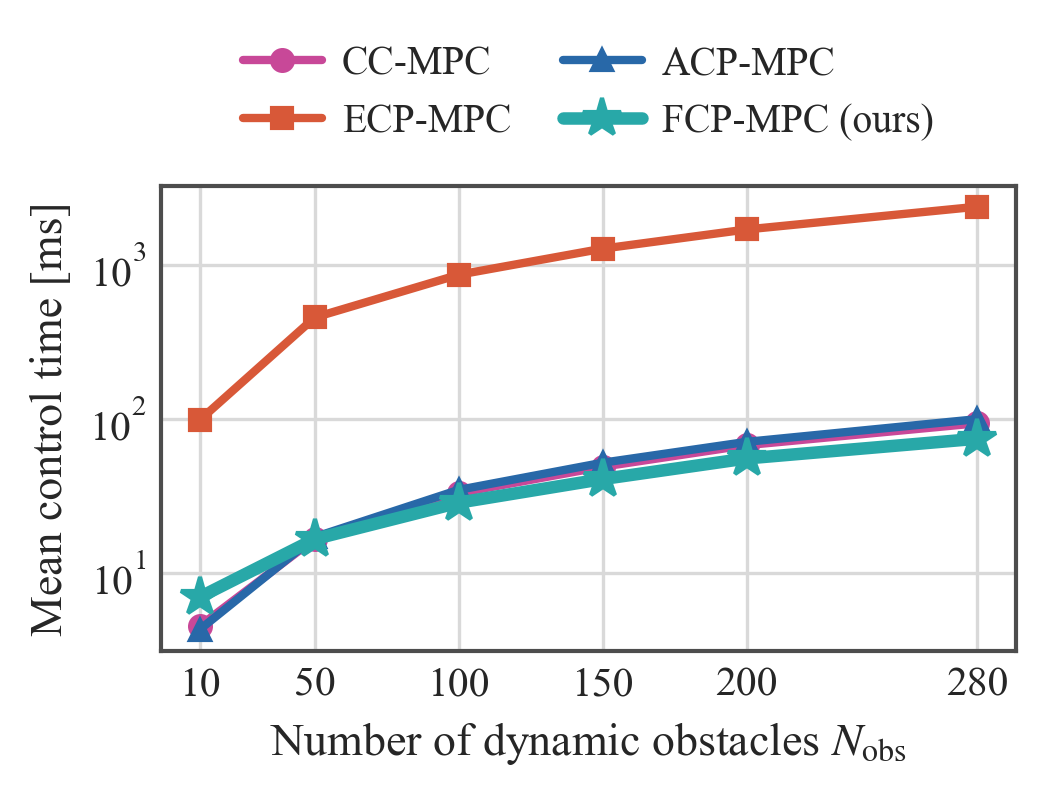}
  \caption{Per-step planning cost versus $N_{\text{obs}}$ in the 3D task: mean control
   time per planning step (log scale, ms) as 
  $N_{\mathrm{obs}}$ grows from $10$ to $280$, for every controller under the shared
  planner. FCP-MPC (ours) reports the soft variant; the hard-filter variant is faster still and likewise flat in $N_{\mathrm{obs}}$.
 ECP-MPC is one to two orders of magnitude slower throughout, and its cost grows with $N_{\mathrm{obs}}$. CC- and ACP-MPC also scale with $N_{\text{obs}}$,
  reaching around $100$\,ms at $N_{\mathrm{obs}}=280$, whereas FCP-MPC (ours) queries a single cached conformal field rather than every obstacle, so its per-step cost stays nearly flat  and remains the lowest of all methods across the range.}
  \label{fig:scalability_3d}
\end{figure}

\begin{table}[t]
\centering
\small
\caption{3D quadrotor navigation results with $280$ dynamic obstacles, reported as
mean $\pm$ std over $17$ seeds ($\alpha=0.1$). FCP-MPC~(hard) and FCP-MPC~(soft) deploy the \emph{same}
offline-conformalized field as a hard feasibility filter and as a soft penalty, respectively.
Steps-to-goal is averaged over successful episodes; ``crashed'' / ``timeout'' mark methods
that never reach the goal. The last column (Goal reached) reports the fraction of the
$17$ seeds on which the controller reached the goal.}
\label{tab:pybullet_results}
\begin{tabular}{lccccc}
\hline
Method &
Collision rate $\downarrow$ &
Infeasible rate $\downarrow$ &
Steps-to-goal $\downarrow$ &
Ctrl.\ time (ms) $\downarrow$ &
Goal reached $\uparrow$ \\
\hline
ACP-MPC~\cite{dixit2023adaptive}
& $0.049\pm0.114$
& $0.125\pm0.161$
& timeout
& 100.3
& 0\% \\
CC-MPC~\cite{lekeufack2024decision}
& $0.053\pm0.040$
& N/A
& $124.4\pm9.5$
& 94.2
& 41\% \\
ECP-MPC~\cite{shin2025egocentric}
& $0.036\pm0.058$
& $\mathbf{0.082}\pm0.068$
& timeout
& 2399.1
& 0\% \\
FCP-MPC (hard)
& $\mathbf{0.023}\pm0.036$
& $0.833\pm0.118$
& timeout
& \textbf{19.9}
& 0\% \\
FCP-MPC (soft)
& $0.028\pm0.032$
& N/A
& $\mathbf{100.4}\pm33.2$
& 74.8
& \textbf{100\%} \\
\hline
\end{tabular}

\end{table}

\begin{table}[!t]
\centering
\small
\caption{3D quadrotor navigation at a lower obstacle count ($N_{\mathrm{obs}}=50$),
reported as mean $\pm$ std over $17$ seeds ($\alpha=0.1$); same setup and columns as
Table~\ref{tab:pybullet_results}. Reporting both densities shows how each method behaves
as the workspace becomes less cluttered.}
\label{tab:results_sparse}
\begin{tabular}{lccccc}
\hline
Method &
Collision rate $\downarrow$ &
Infeasible rate $\downarrow$ &
Steps-to-goal $\downarrow$ &
Ctrl.\ time (ms) $\downarrow$ &
Goal reached $\uparrow$ \\
\hline
ACP-MPC~\cite{dixit2023adaptive}
& $0.005\pm0.010$
& $0.015\pm0.034$
& timeout
& 17.2
& 0\% \\
CC-MPC~\cite{lekeufack2024decision}
& $\mathbf{0.003}\pm0.012$
& N/A
& $162.2\pm34.0$
& 16.8
& 94\% \\
ECP-MPC~\cite{shin2025egocentric}
& $0.005\pm0.011$
& $\mathbf{0.013}\pm0.021$
& timeout
& 458.8
& 0\% \\
FCP-MPC (hard)
& $0.015\pm0.031$
& $0.353\pm0.226$
& $134.1\pm39.4$
& \textbf{8.7}
& 88\% \\
FCP-MPC (soft)
& $0.008\pm0.026$
& N/A
& $\mathbf{78.1}\pm6.9$
& 16.7
& \textbf{100\%} \\
\hline
\end{tabular}

\end{table}

\noindent\textbf{Scalability.}
Figure~\ref{fig:scalability_3d} reports the mean per-step control time as $N_{\mathrm{obs}}$ grows from $10$ to $280$.
Because FCP-MPC checks safety by querying a \emph{cached conformal field} rather than
re-evaluating distances to every obstacle, its per-step cost is largely insensitive to
$N_{\text{obs}}$ and remains the lowest of all methods across the range. The baselines instead scale with the obstacle count: CC- and ACP-MPC rise from tens
of milliseconds to nearly $100$\,ms at $N_{\mathrm{obs}}=280$. ECP-MPC, which also
performs point-wise conformal reasoning across obstacles and horizon steps, is one to two orders of magnitude slower throughout with a cost that also grows in $N_{\mathrm{obs}}$. 
This difference is structural: FCP-MPC's per-step safety check is a single cached-field query, $O(1)$ in the obstacle count, whereas ACP- and CC-MPC are $O(N_{\mathrm{obs}})$ and ECP-MPC is $O(N_{\mathrm{obs}}H)$ per step, independent of hardware and implementation.

\subsection{Field Coverage under Deployment-Time Density Shift}\label{subsec:density-shift}

A central claim of our framework is that the online update sustains the field-coverage guarantee when the deployment stream drifts from the offline calibration data. We test this directly with a \emph{density shift}: the functional envelope is conformalized once, offline, at $N_{\mathrm{obs}}=50$,  then deployed at higher densities ($N_{\mathrm{obs}}\in\{50,80,120\}$) without re-fitting. 
A denser min-of-distances field switches its nearest-obstacle identity more often, so the constant-velocity predictor is systematically wrong at more locations and the realized residual field deviates increasingly from the offline calibration.

Table~\ref{tab:density_shift} reports the realized field coverage at the applied step, measured as the $\forall\mathrm{x}\in\bar{\mathcal{X}}$ event of Theorem~\ref{thm:asymptotic_safety}. 
With a \emph{static} projection
slack $\varepsilon_i$, the offline envelope fails to cover the full-field event and degrades as density grows ($0.080\!\to\!0.034$): the fixed slack budgets only for the projection residual seen offline, whereas the deviations realized at test time exceed it. Adapting $\varepsilon_i$ online by the projection-slack recursion~\eqref{eq:eps_update}
(Corollary~\ref{cor:eps-coverage}) restores the target $1-\alpha=0.9$
coverage and \emph{holds it across all densities}. 

Notably, collisions remain small across all densities and both settings, including the static configuration whose coverage collapses. There, receding-horizon replanning corrects each transient envelope violation at the next observation, before it becomes a collision. Safety and field coverage thus come apart in this regime: coverage is sufficient for safety but not, here, necessary. We take up what the coverage certificate buys, once the feedback structure already absorbs most of the coverage loss, in Section~\ref{sec:discussion}.

\begin{table}[t]
\centering
\caption{Realized field coverage under a deployment-time \emph{density shift}. The functional
envelope is conformalized once offline at $N_{\mathrm{obs}}=50$ and deployed at higher obstacle
densities without re-fitting. ``Static'' keeps the offline projection slack fixed; ``Adaptive''
updates it online. Field coverage is the realized $\forall\mathrm{x}\in\bar{\mathcal{X}}$ event at
the applied step ($1-\alpha=0.9$ target), while ``near-obs.'' restricts it to cells within $1$\,m of an
obstacle. Mean over $10$ seeds; collisions remain small ($\leq 0.05$).}
\label{tab:density_shift}
\begin{tabular}{llccc}
\hline
$N_{\mathrm{obs}}$ & Method & Field cov.\ ($\forall\mathrm{x}$) & Near-obs.\ cov. & Collision \\
\hline
$50$   & Static   & $0.080\pm0.011$ & $0.091\pm0.010$ & $0.036\pm0.092$ \\
               & Adaptive & $\mathbf{0.897}\pm0.021$ & $\mathbf{0.898}\pm0.021$ & $0.032\pm0.083$ \\
\hline
$80$           & Static   & $0.051\pm0.012$ & $0.061\pm0.013$ & $0.005\pm0.013$ \\
               & Adaptive & $\mathbf{0.891}\pm0.022$ & $\mathbf{0.893}\pm0.021$ & $0.004\pm0.012$ \\
\hline
$120$          & Static   & $0.034\pm0.008$ & $0.040\pm0.008$ & $0.035\pm0.103$ \\
               & Adaptive & $\mathbf{0.909}\pm0.020$ & $\mathbf{0.910}\pm0.019$ & $0.049\pm0.109$ \\
\hline
\end{tabular}
\end{table}

\section{Discussion}
\label{sec:discussion}

\noindent\textbf{Enforcing the safety bound in 2D.}
The 2D benchmarks isolate how the conformal lower bound should be \emph{used}
once it has been constructed.
The hard variant enforces the bound as a feasibility filter and thus carries the
closed-loop guarantee. 
As the worst-case envelope compounds over the horizon, however,
it can become overconservative, leaving a non-zero infeasible rate and higher collisions
in the harder scenes. 
The soft variant resolves this  by treating the bound as a penalty rather than a hard constraint: it yields the shortest paths, trading a small, density-independent collision cost for that progress (Table~\ref{tab:2d_results}).
The optional online adaptation of~\eqref{eq:acp_update} is a low-cost refinement whose
effect on these in-distribution benchmarks is marginal and scene-dependent
(Table~\ref{tab:2d_ablation}): the offline-conformalized envelope does the heavy lifting
when deployment matches calibration. 
Its value appears instead under distribution shift, which we examine separately in Section~\ref{subsec:density-shift}.
Both variants reuse the offline conformal prediction, so any online cost is only a few milliseconds, far below the tens required by ECP-MPC's explicit
online uncertainty reasoning. These pedestrian experiments thus exercise the fieldwise conformal prediction where free space is scarce, complementing the open 3D study discussed next.

\noindent\textbf{Separating conformal prediction from real-time planning in 3D.}
The two ways the baselines fail, conservativeness and computation cost (Section~\ref{sec:quadrotor-nav}), share the same cause: they quantify uncertainty \emph{online}, per obstacle or per robot state. 
This is both costly at runtime and forces a locally tight margin around every object. FCP instead
estimates the conformal \emph{field} once offline. At runtime, a single cached-field query suffices, and the result is far less conservative because the bound is reasoned globally over space rather than inflated object by object. This is why
FCP is at once the cheapest method and the only conformal construction whose hard filter 
clears the field when free space is adequate.

\noindent\textbf{Certified hard filter vs.\ practical soft deployment.}
The same offline field can be deployed in two ways (Section~\ref{subsec:mpc_formulation}). As a \emph{hard} filter it
carries the formal coverage guarantee of
Theorems~\ref{thm:asymptotic_safety}--\ref{thm:asymptotic_safety_afcp}, at the
price of conservativeness that grows with density: the infeasible rate
rises from $0.353$ at $N_{\mathrm{obs}}=50$, where the filter still reaches the
goal on $88\%$ of seeds at the lowest per-step cost of any method, to $0.833$ at
$N_{\mathrm{obs}}=280$, where the worst-case envelope leaves so few feasible
candidates that the planner stalls and times out. 
As a \emph{soft} penalty the field carries a slightly weaker but still formal guarantee (Theorem~\ref{thm:soft_safety}): on feasible steps it is safe up to a slack that
vanishes as $w\to\infty$, and elsewhere it degrades gracefully to the
least-violating plan. 

Which branch of the soft guarantee applies depends on the density. The \emph{exact} branch presumes a non-empty hard feasible set at every step, and this premise is itself empirically violated at $N_{\mathrm{obs}}=280$ (hard infeasible rate $0.833$).
The assurance we actually rely on there is therefore the
\emph{degradation} branch of Theorem~\ref{thm:soft_safety} together with the
empirical per-step coverage of Section~\ref{subsec:density-shift}, while the clean $1-\alpha$ statement
governs the moderate-density regime in which the hard filter remains feasible.
Empirically, the soft deployment removes infeasibility entirely and attains
the lowest collision rate among goal-reaching methods ($0.028$ at $N_{\text{obs}}=280$, within
$\alpha=0.1$), reaching the goal on every seed at both densities. Notably,
FCP-MPC stays within the target miscoverage level \emph{while actively traversing}
the dense field, whereas ACP's low collision count reflects a controller that
never threads it, and ECP is both slow and
over-conservative.  We attribute this reliable
goal-reaching to the spatial coherence of the field-level bound, which exposes globally safe corridors rather than forcing locally conservative, point-wise
reasoning.

\noindent\textbf{On the value of coverage in closed-loop scenarios.}
Section~\ref{subsec:density-shift} shows that closed-loop replanning absorbs \emph{most} of a coverage loss: even when the static envelope's field coverage collapses, the realized collision rate stays low because receding-horizon correction catches transient violations at the next observation. 
It does not, however, catch \emph{all} of them, as the static configuration's small but nonzero collisions in Table~\ref{tab:density_shift} indicate. It is therefore worth stating precisely what the field-coverage guarantee and the AFCP update provide that feedback alone does not. 
First, the coverage guarantee is \emph{planner-agnostic}: every feasible candidate at a step is safe, whereas a low closed-loop collision rate only certifies that the executed trajectories happened to avoid contact, with no assurance under a different sampler, seed, or planner. 
Second, it is \emph{pre-hoc}: it prevents violations from arising in the first place, whereas closed-loop correction is reactive and presumes that the next observation arrives, and a feasible escape exists, before a transient violation becomes a collision. Our fast, open, and slow-obstacle settings largely satisfy these recovery assumptions, which is why feedback suffices here; under high travel speed, tight corridors, or sensing latency they may fail, and the pre-hoc guarantee is what remains. The residual collisions already visible in Table~\ref{tab:density_shift} are a mild instance of this gap; a dedicated study of the non-recoverable regime is left to future work.

This fixes the role of each deployment. 
The hard filter is the \emph{exactly} certified variant, appropriate when free space is adequate; the soft penalty is the deployment we recommend in the densest scenes, where strict enforcement is too conservative to make progress; and the online update may be engaged where the formal guarantee is wanted, or disengaged where closed-loop correction suffices and feasibility is at a premium.
The hard/soft and AFCP-on/off choices thus fall on one axis: the framework selects, by deployment context, how much of its safety to take as a proactive certificate versus closed-loop correction, rather than committing to one everywhere.

\section{Conclusion}

We presented a functional conformal prediction framework that conformalizes a
high-confidence lower bound of the predicted distance field and enforces it as a
tightened safety constraint within a sampling-based model predictive controller,
FCP-MPC. By conformalizing uncertainty at the level of the spatial \emph{field} rather
than at individual states or trajectories, the method reasons about safety globally: the resulting certificate holds for any trajectory
satisfying it, independent of the control sampler. Because the residual field is low-rank and approximately time-invariant, its offline--online decomposition keeps per-step computation low, with a GMM-based envelope fitted offline and 
 only a single scalar per horizon index refined online by a lightweight
adaptive update. We established asymptotic closed-loop safety both under exchangeability and, through
the online adaptation, under distribution shift.
Across the ETH--UCY pedestrian benchmarks and a dense 3D quadrotor task with up to $280$ moving obstacles, FCP-MPC attained a favorable balance of safety,
feasibility, and efficiency. Its  per-step cost stayed largely insensitive to obstacle count, scaling far more gracefully than online uncertainty-reasoning baselines.

\noindent
{\bf Limitations and future work.}
Because the conformalized envelope bounds the worst case, the hard-constrained variant
remains conservative and can still report a non-zero infeasible rate in the densest
2D scenes; the soft and online-adapted variants mitigate, but do not eliminate, this
trade-off.
Our guarantees further assume a \emph{fixed} functional basis. Relaxing these assumptions and learning the basis online, folding perception and state-estimation error into the same conformal pipeline, and extending the method to richer robot and obstacle dynamics are all promising directions.

\appendix

\section{Derivation of Equation~\eqref{eq:upper-envelope-closed}}\label{app:derivation}
We derive the closed-form expression of $\breve{\mathsf{U}}_{t+i|t}$.
Since the conformity score $g$ is a maximum over mixture components, the prediction set $\mathscr{T}_{t+i\mid t}$ decomposes exactly as a union of component-wise sublevel sets 
(cf.~\cite[Eq. (9)]{lei2015conformal}):
\begin{equation}
\begin{aligned}
\mathscr{T}_{t+i\mid t} &= \bigcup_{k=1}^K \mathscr{T}^{(k)}_{t+i\mid t}, \\
\mathscr{T}^{(k)}_{t+i\mid t} &\coloneqq \left\{\xi:\ \mathcal{N}(\xi;\widehat\mu_k,\widehat\Sigma_k)\ge \tfrac{\lambda_i}{\widehat\pi_k}\right\}.
\end{aligned}
\label{eq:exact_union}
\end{equation}
Each $\mathscr{T}^{(k)}_{t+i\mid t}$ is an ellipsoid
\begin{equation}
\mathscr{T}^{(k)}_{t+i\mid t}
=
\left\{\xi:\ (\xi-\widehat\mu_k)^\top \widehat\Sigma_k^{-1}(\xi-\widehat\mu_k)\le r_{k,i}^2\right\},
\label{eq:ellipsoid_def}
\end{equation}
where the radius is
\begin{equation*}
r_{k,i}^2
=
\max\!\left\{
0,\,
-2\log\!\left(
\frac{\lambda_i}{\widehat\pi_k}\,(2\pi)^{p_i/2}\sqrt{\det(\widehat\Sigma_k)}
\right)
\right\}.
\end{equation*}
The $\max\{0,\cdot\}$ covers the degenerate case in which the threshold $\lambda_i/\widehat\pi_k$ exceeds the component's peak density, so that the sublevel set is empty.

Recall from Section~\ref{sec:gmm_ellipsoids} that $\psi_i(\mathrm{x})\in\mathbb{R}^{p_i}$ collects the basis evaluations at $\mathrm{x}$, so the projected value of a coefficient vector $\xi$ is $\xi^\top\psi_i(\mathrm{x})$. 
Using the union~\eqref{eq:exact_union},
\begin{equation*}
\breve{\mathsf{U}}_{t+i|t}(\mathrm{x}) \coloneqq \sup_{\xi\in\mathscr{T}_{t+i\mid t}} \xi^\top \psi_i(\mathrm{x})
=
\max_{1\le k\le K}\ \sup_{\xi\in\mathscr{T}^{(k)}_{t+i\mid t}} \xi^\top \psi_i(\mathrm{x}).
\end{equation*}
For an ellipsoid~\eqref{eq:ellipsoid_def}, the inner supremum is given as:
\begin{equation}
\sup_{\xi\in\mathscr{T}^{(k)}_{t+i\mid t}} \xi^\top \psi_i(\mathrm{x})
=
\widehat\mu_k^\top \psi_i(\mathrm{x})
+
r_{k,i}\sqrt{\psi_i(\mathrm{x})^\top \widehat\Sigma_k\, \psi_i(\mathrm{x})}.
\label{eq:ellipsoid_support_function}
\end{equation}
Taking the maximum over $k$ yields the desired representation~\eqref{eq:upper-envelope-closed}.
For numerical stability, we add a small diagonal jitter to $\widehat\Sigma_k$.

\begin{table}[t]
\centering
\caption{Empirical field coverage at the applied horizon $i=1$ on held-out
ETH--UCY scenes: the fraction of test scenes on which
$D_{t+1}(\mathrm{x})\ge \mathsf{L}_{t+1\mid t}(\mathrm{x})$ for all $\mathrm{x}\in\bar{\mathcal{X}}$,
as the target $1-\alpha$ is swept. The pooled \textbf{All} row meets the target at every level.}
\label{tab:field_coverage}
\begin{tabular}{lcccc}
\hline
Scene & \multicolumn{4}{c}{Target field coverage $1-\alpha$} \\
\cline{2-5}
 & $0.95$ & $0.90$ & $0.80$ & $0.70$ \\
\hline
\texttt{eth} & 99.4 & 99.4 & 96.4 & 95.2 \\
\texttt{hotel} & 99.6 & 98.2 & 94.6 & 89.7 \\
\texttt{univ} & 95.4 & 92.6 & 83.3 & 67.6 \\
\texttt{zara1} & 96.5 & 91.3 & 85.5 & 80.3 \\
\texttt{zara2} & 96.2 & 93.3 & 87.6 & 80.9 \\
\hline
\textbf{All} & 97.6 & 95.2 & 90.1 & 84.1 \\
\hline
\end{tabular}
\end{table}

\begin{table}[t]
\centering
\caption{Per-horizon empirical field coverage at the deployed level $\alpha=0.1$:
the same event resolved by prediction-horizon index $i$, with the envelope
conformalized once offline per horizon index. Coverage holds at or above the target
$1-\alpha=0.9$ across the full $N=12$ horizon.}
\label{tab:field_coverage_horizon}
\begin{tabular}{lccccc}
\hline
Scene & \multicolumn{4}{c}{Horizon index $i$ (target $1-\alpha=0.90$)} & Pooled \\
\cline{2-5}
 & $i=1$ & $i=4$ & $i=8$ & $i=12$ & $i\!=\!1..12$ \\
\hline
\texttt{eth} & 99.4 & 98.7 & 100.0 & 100.0 & 98.6 \\
\texttt{hotel} & 98.2 & 99.0 & 98.9 & 98.7 & 98.7 \\
\texttt{univ} & 92.6 & 88.9 & 94.4 & 95.3 & 92.9 \\
\texttt{zara1} & 91.3 & 94.7 & 96.3 & 96.9 & 95.4 \\
\texttt{zara2} & 93.3 & 92.8 & 95.6 & 98.0 & 95.0 \\
\hline
\end{tabular}

\end{table}

\section{Empirical Field-Coverage Validation}\label{app:field_coverage}

We corroborate that the conformalized distance field attains its target level by measuring
the empirical \emph{field coverage}: the frequency of the functional coverage event
$\{ D_{t+i}(\mathrm{x})\ge \mathsf{L}_{t+i\mid t}(\mathrm{x})\ \ \forall \mathrm{x}\in\bar{\mathcal{X}}\,\}$
that underlies our coverage guarantee stated in Theorems~\ref{thm:asymptotic_safety}--\ref{thm:asymptotic_safety_afcp} (established via Theorem~\ref{thm:acp-coverage}). 
On a held-out $20\%$ split of each ETH--UCY
scene's Trajectron++ prediction samples, we mark a scene \emph{uncovered} if the conformalized lower-bound field is violated at even a single grid point $\mathrm{x}\in\bar{\mathcal{X}}$, and average this indicator over the test scenes. 
The envelope is fit on the complementary split exactly as deployed:
 per horizon index, $K=7$ GMM components, $p_i=5$ FPCA modes, and a $30\%$ calibration
fraction for the conformal quantile.

Table~\ref{tab:field_coverage}  reports the empirical coverage at the applied step $i=1$ (the step to which the closed-loop guarantee applies) as the target $1-\alpha$ is swept. The pooled \textbf{All} row meets the target at every level, supporting the coverage guarantee.
The only \emph{sub-target} entry, \texttt{univ} at the loosest target $1-\alpha=0.7$,
reflects finite-sample fluctuation on the densest scene, where the margin to the target is smallest and a single held-out window can tip the pooled fraction.

Table~\ref{tab:field_coverage_horizon} reports the same event at the deployed level $\alpha=0.1$, resolved by prediction-horizon index $i$. Coverage holds at or above $1-\alpha=0.9$ over the full $N=12$ horizon, because the
per-horizon envelope widens with $i$ to absorb the degrading predictor. By
construction the $i=1$ column of Table~\ref{tab:field_coverage_horizon} equals the
$1-\alpha=0.9$ column of Table~\ref{tab:field_coverage}.

\bibliographystyle{IEEEtran}
\bibliography{reference}

\end{document}